\documentclass{article}

\usepackage{microtype}
\usepackage{graphicx}
\usepackage{subcaption}

\usepackage{caption}
\usepackage{dblfloatfix}

\usepackage{booktabs} % for professional tables
\usepackage[hyphens]{url} % url line breaking in the bibliography
\usepackage{hyperref}

\usepackage[accepted]{icml2026}

\usepackage{comment}
\usepackage{amsmath}
\usepackage{amssymb}
\usepackage{mathtools}
\usepackage{amsthm}
\usepackage{algorithmic}
\usepackage{graphicx}
\usepackage{textcomp}
\usepackage[dvipsnames,table,xcdraw]{xcolor}

\usepackage{multirow}
\usepackage{url}
\usepackage{tabularx}
\usepackage{booktabs}
\usepackage{bm}
\usepackage{extdash}
\usepackage{colortbl}
\usepackage{caption}
\usepackage{subcaption}
\usepackage{circuitikz}
\usepackage{enumitem}
\usepackage{forest}
\usetikzlibrary{shapes, positioning}
\usepackage{svg}
\usepackage{adjustbox}
\usepackage{microtype}
\usepackage{afterpage}
\usepackage{calrsfs}
\usepackage{pifont} 

\usepackage{pgffor}
\usepackage{placeins}
\usepackage{balance}
\usepackage{nicematrix}
\usepackage[bottom]{footmisc}
\usepackage{orcidlink}
\usepackage{wrapfig}
\usepackage{svg}
\usepackage{booktabs}

\newcommand{\yes}{\textcolor{ForestGreen}{\ding{51}}}
\newcommand{\no}{\textcolor{BrickRed}{\ding{55}}}
\usepackage[capitalize,noabbrev]{cleveref}

\theoremstyle{plain}
\newtheorem{theorem}{Theorem}[section]

\newtheorem{lemma}[theorem]{Lemma}
\newtheorem{corollary}[theorem]{Corollary}
\theoremstyle{definition}
\newtheorem{definition}[theorem]{Definition}

\theoremstyle{remark}
\newtheorem{remark}[theorem]{Remark}
\usepackage{comment}
\usepackage{placeins}

\usepackage[textsize=tiny]{todonotes}
\definecolor{nice}{rgb}{1.0,0.71,0.76}%plans
\icmltitlerunning{RECAST: Model Reconstruction via Counterfactual-Aware Wasserstein Geometry under Limited Data}
\begin{document}

\twocolumn[
  %\icmltitle{Classifier Reconstruction Through Counterfactual-Aware Wasserstein Prototypes}
%\icmltitle{RECAST: Model Reconstruction via \\ %Counterfactual Constrained Wasserstein Geometry}
\icmltitle{RECAST: Model Reconstruction via \\ Counterfactual-Aware Wasserstein Geometry under Limited Data}
  % It is OKAY to include author information, even for blind submissions: the
  % style file will automatically remove it for you unless you've provided
  % the [accepted] option to the icml2026 package.

  % List of affiliations: The first argument should be a (short) identifier you
  % will use later to specify author affiliations Academic affiliations
  % should list Department, University, City, Region, Country Industry
  % affiliations should list Company, City, Region, Country

  % You can specify symbols, otherwise they are numbered in order. Ideally, you
  % should not use this facility. Affiliations will be numbered in order of
  % appearance and this is the preferred way.
  \icmlsetsymbol{equal}{*}

  \begin{icmlauthorlist}
    \icmlauthor{Xuan Zhao}{equal,yyy}
    \icmlauthor{Lena Krieger}{equal,yyy,comp}
    \icmlauthor{Zhuo Cao}{yyy}
    \icmlauthor{Arya Bangun}{yyy}
    \icmlauthor{Hanno Scharr}{yyy}
    \icmlauthor{Ira Assent}{yyy,sch}
  \end{icmlauthorlist}

  \icmlaffiliation{yyy}{IAS-8, Forschungszentrum Jülich, Germany}
  \icmlaffiliation{comp}{LMU Munich, Munich Center for Machine Learning (MCML), Germany}
  \icmlaffiliation{sch}{Department of Computer Science, Aarhus University, Denmark}

  \icmlcorrespondingauthor{Xuan Zhao}{xu.zhao@fz-juelich.de}
  \icmlcorrespondingauthor{Lena Krieger}{l.krieger@fz-juelich.de}

  % You may provide any keywords that you find helpful for describing your
  % paper; these are used to populate the "keywords" metadata in the PDF but
  % will not be shown in the document
  \icmlkeywords{Model Reconstruction, Counterfactual Explanations, Wasserstein Geometry}
  \vskip 0.3in
]

% this must go after the closing bracket ] following \twocolumn[ ...

% This command actually creates the footnote in the first column
% listing the affiliations and the copyright notice.
% The command takes one argument, which is text to display at the start of the footnote.
% The \icmlEqualContribution command is standard text for equal contribution.
% Remove it (just {}) if you do not need this facility.

%\printAffiliationsAndNotice{}  % leave blank if no need to mention equal contribution
\printAffiliationsAndNotice{\icmlEqualContribution} % otherwise use the standard text.

\begin{abstract}
Counterfactual explanations (CFs) help understand machine learning models by identifying minimal input changes that would lead to alternative model outcomes. Recent work demonstrates their utility for reconstructing black-box models, enabling third-party auditing of opaque decision systems for fairness and accountability. Still, CF-based reconstruction may suffer from decision boundary shifts, overfitting, and restrictive assumptions requiring online query access to target platforms.
We propose \textbf{REconstruction via Counterfactual-Aware waSserstein opTimization (RECAST)} under limited data and restricted access, a behavioral surrogate model based on Wasserstein barycentric prototypes. Our approach addresses decision boundary shifts by incorporating CFs as informative, though less representative, samples for both classes, maintaining high surrogate fidelity in low-sample regimes without requiring online access during reconstruction. To enhance fairness auditing, our method enables systematic group fairness diagnostics.
Experiments on real-world datasets and various setups show that \textbf{RECAST} effectively achieves high fidelity and query efficiency, as well as stable results even when the access is limited and noisy. 
\end{abstract}

\section{Introduction}\label{intro}
\begin{figure}[t]
\centering 
\includegraphics[width=0.90\linewidth]{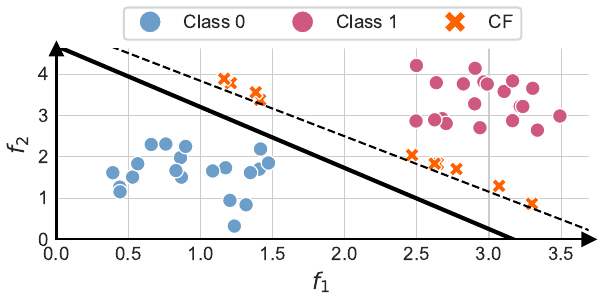}
\caption{CFs (\textbf{x}) are generated for class 0 (blue) samples - minimal changes that flip the target's prediction from class 0 $\rightarrow$ class 1 (pink). Naively treating these as class 1 training samples shifts the surrogate's boundary (solid line) toward class 0, causing class 0 samples to be misclassified as class 1. This is \textbf{overconfidence}: the surrogate becomes overly certain about class 1 in regions where the target (dashed line) is not. We instead treat CFs as \emph{less certain} samples, giving them less influence than true class 1 samples.}\label{fig:overconfident}
\end{figure}
Machine-Learning-as-a-Service (MLaaS) platforms have democratized access to AI systems in high-stakes decision-making systems. %, e.g., loan applications, hiring, or healthcare. 
Such systems can be audited by reconstruction, i.e., creation of a surrogate model to closely mimic the behaviour of the target model, enabling independent investigation of model behavior, including transparency, accountability, and fairness concerns such as fairwashing~\cite{aivodji2021characterizing}. Reconstruction under limited access is fundamentally a problem of stability: conclusions about model behavior should remain invariant across all models that are behaviorally indistinguishable from the observed data.
While studied as model extraction attacks (MEA) \cite{DBLP:conf/uss/TramerZJRR16,DBLP:journals/cm/GongWCYJ20, carlini2024stealing, ferrytrained} in adversarial contexts, reconstruction is crucial to understand model behaviour in high-stakes decision-making systems as required by the EU AI Act\footnote{\url{artificialintelligenceact.eu/chapter/3}}. In addition to transparency and accountability, the AI Act also aims to improve diversity and fairness. One way to achieve this is through model reconstruction, by examining how sensitive groups are distributed relative to the learned decision geometry. %This can be achieved with reconstruction: by examining how sensitive groups are distributed relative to the learned decision geometry, one can identify systematic disparities without direct access to the target model. %In addition to transparency and accountability, the AI Act also aims to improve diversity and fairness, which requires detecting discrimination and disparities. 
Recent work leverages counterfactual explanations (CFs), i.e., minimally modified instances that attain a desired prediction, for model reconstruction
\citep{DBLP:journals/corr/abs-2009-01884, DBLP:conf/fat/WangQM22,
DBLP:conf/nips/DissanayakeD24}. This line of work features two main advantages: CFs naturally
encode boundary-adjacent information, and improve query efficiency by
providing pairs of closely related instances with contrasting predictions.

However, existing CF-based reconstruction approaches face critical limitations. First, decision boundary shifts occur, when surrogate models trained on CFs as full class samples \citep{DBLP:journals/corr/abs-2009-01884} become overconfident in regions where the CFs are located. Then, the learned decision boundary is shifted incorrectly (Fig.~\ref{fig:overconfident}), due to margin-based generalization \citep{DBLP:conf/aies/ShokriSZ21}, and worsens with one-sided CFs. Two-sided CFs could mitigate this \citep{DBLP:conf/fat/WangQM22}, but typically platforms do not provide them: rejected loan applicants receive explanations on how to achieve approval, but approved applicants do not learn how their application might have been denied \citep{DBLP:journals/corr/abs-2009-01884,DBLP:conf/nips/DissanayakeD24}. Second, neural network-based methods such as Counterfactual Clamping Attacks \citep{DBLP:conf/nips/DissanayakeD24}, are prone to overfitting when limited training data is available. As data is usually costly in this setting, constrained by strict query budgets to limit API costs and rate limits, data efficiency is critical.
Third, under low-query and one-sided CF settings, decision boundaries are statistically non-identifiable, rendering recovery or approximation of decision boundaries \citep{DBLP:conf/nips/DissanayakeD24,khouna2025counterfactuals} inapplicable: multiple classifiers may exhibit indistinguishable input--output behavior while having substantially different decision surfaces. Also, data is potentially noisy, as MLaaS platforms may return imperfect or perturbed outputs \citep{liang2024model}. We assume this is a passive observation problem and do not model interactions with a strategic platform using adversarial threat models.

For effective reconstruction of a binary classifier with limited or noisy data, we propose \textbf{REconstruction via Counterfactual-Aware waSserstein opTimization (RECAST)} which incorporates CFs as \emph{soft} samples for both classes. We leverage Wasserstein barycenters as prototypes for each class to capture the underlying structure of each class distribution in a data-efficient manner, suitable for low-data regimes and noisy samples. To detect asymmetric treatment and decision disparities, we adapt a group fairness diagnostic, based on the reconstructed model, that compares how different sensitive groups (e.g., gender or race) are positioned relative to the barycentric decision geometry. 

Our main contributions include:
\begin{itemize}[nosep]
   
    \item Our \textbf{Model REconstruction via Counterfactual Constrained Wasserstein Geometry (RECAST)} solution to behavioral reconstruction via counterfactual-aware Wasserstein barycentric prototypes.
    
    \item Empirical validation of effectiveness of \textbf{RECAST} in various settings, including four datasets, different target model classes, CF generators, and noisy data, demonstrating \textbf{high fidelity} and \textbf{query efficiency}.  
    \item A geometry-based \textbf{fairness diagnostic} to characterize distributional disparities across sensitive groups under distribution shift.
\end{itemize}

\section{Related Work}

Model extraction attacks (MEAs) aim to reconstruct hidden models from queries to an MLaaS platform \citep{liang2024model,zhao2025survey}. 

In our work, we focus on functionally equivalent extraction, i.e., recovering a surrogate model whose predictions match those of the target model \citep{DBLP:journals/corr/abs-2009-01884} under \emph{offline} access, i.e., based on a fixed sample set constructed without any prior knowledge.

Existing work usually requires \emph{interactive online} query access to the target model \emph{during} the reconstruction process, often not available in practice (\cref{tab:relatedworks}).

Early methods focused on recovering model parameters or explicitly approximating decision boundaries through costly extensive querying \citep{pal2020activethief,jagielski2020high}. 
More recent ones employ CFs, i.e., a minimally modified instance for an input that attains the desired prediction, for instance suggesting an income increase to secure a loan otherwise rejected. 
\citet{DBLP:journals/corr/abs-2009-01884}, here referred to as SAMPLES, use CFs as \emph{full} labeled data.
The approach is vulnerable to decision boundary shift, i.e., it over-confidently assesses samples that lie very close to the decision boundary and are therefore \emph{less} representative (see Fig.~\ref{fig:overconfident}). In practice, typically only one-sided CFs are available, to support users in understanding negative outcomes, while avoiding information leak about the hidden model.

\begin{table}
    \caption{Model reconstruction using CFs. RECAST intentionally avoids explicit boundary recovery and instead focuses on functionality reconstruction, which is better posed under \emph{one-sided CFs}, \emph{offline access} and \emph{low-query settings}.
    $^*$ paper studies only neural networks. $^+$ requires prior knowledge about the distribution. SAMPLES: \citet{DBLP:journals/corr/abs-2009-01884}. } 
    \label{tab:relatedworks}
    \centering
    \resizebox{\linewidth}{!}{
    \begin{tabular}{lcccc}
\toprule
Method & One-sided CFs & Model agnostic & Behavior-centric & Access?  \\
\midrule
DualCF& \no &  \no  &\yes& Online\\
TRA& \yes &  \no  &\no&Online\\
CCA& \yes & \yes  &\no& Online\\
SAMPLES & \yes & \yes$^*$ &\yes&Offline$^+$ \\
RECAST (Ours)& \yes  & \yes &\yes  & Offline \\
\bottomrule
\end{tabular}}
\end{table}

Recent works, particularly those leveraging CFs, often adopt a boundary-centric perspective. They aim to infer the geometry of the target classifier decision boundary, e.g., with a modified entropy loss that treats CFs as in-between classes (CCA \citep{DBLP:conf/nips/DissanayakeD24}) or divide the input space into subspaces and incorporate locally optimal CFs (TRA \citep{khouna2025counterfactuals}).

However, the non-identifiability effect \citep{DBLP:conf/cvpr/OrekondySF19,DBLP:conf/uss/TramerZJRR16} means that under finite samples, different classifiers can exhibit identical or nearly identical classification behavior while having fundamentally different decision boundaries. Another line of work \cite{DBLP:journals/corr/abs-2009-01884,DBLP:conf/fat/WangQM22} thus emphasizes behavior-centric reconstruction, to reproduce the input–output behavior of the target model rather than its decision geometry. 
However, earlier approaches that include CFs, require access to two-sided CFs, e.g., training on pairs, consisting of CFs and CFs for CFs (DualCF \citep{DBLP:conf/fat/WangQM22}) or are challenged by decision boundary shifts (SAMPLES \citep{DBLP:journals/corr/abs-2009-01884}).
We propose a behavior-centric approach, that addresses these limitations, by reconstructing class-level behavior from CFs, which naturally encode boundary-adjacent information, using Wasserstein-based prototypes.

While superficially related to classical learning under data selection bias \citep{heckman1979sample, shimodaira2000improving, zadrozny2004learning}, the setting differs in two fundamental aspects.
First, the target decision boundary is statistically non-identifiable under offline, one-sided CF access, whereas classical reweighting assumes an identifiable target approximated via ERM on reweighted samples. Second, CFs may overshoot or be systematically biased (see \cref{fig_cf_distribution}), rendering them unreliable as boundary samples. Margin-based methods such as SVMs rely precisely on reliability. CFs instead encode directional distributional constraints between class-conditional distributions, which RECAST handles at distribution level via Wasserstein surrogates rather than via sample-level reweighting or margin maximization.

\begin{figure*}[t]
    \centering
    \includegraphics[width=.98\linewidth]{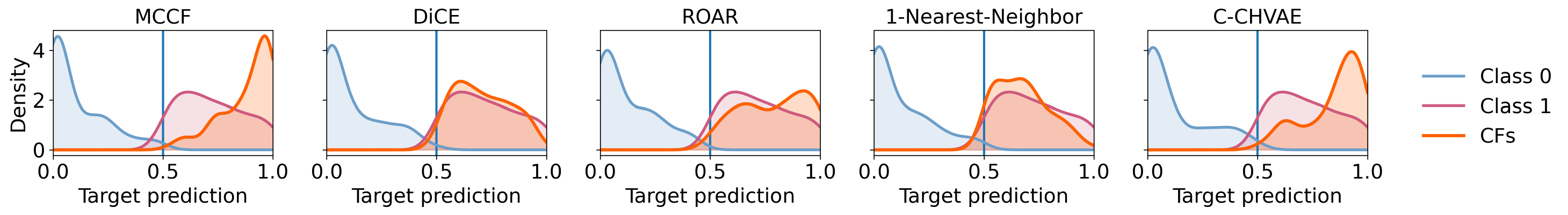}
    \caption{
Kernel density estimates of target model scores $\hat{y}_m(x)$ for both class inputs, CFs; common CF methods.
Concentration well above decision threshold 0.5 means CFs overshoot the boundary, confirming RECAST's approach of avoiding near-boundary assumptions.
}\label{fig_cf_distribution}
\end{figure*}

\section{The RECAST Method}\label{sec:method}
We start by defining the problem of model reconstruction in low-query regime and with one-sided CF, then describe how our new RECAST approach leverages Wasserstein optimization under CF-consistent uncertainty (Section~\ref{method_bary}) to learn class conditional prototypes (Section~\ref{subsec:robustbary}), which we employ in a barycentric prototype classifier (Section~\ref{subsec:cls}). Finally, we adapt a threshold invariant fairness diagnostic to a Wasserstein geometry perspective (Section~\ref{subsec:fairness}).

\paragraph{Problem Setting.} %\label{subsec:problem}
We study reconstruction of a binary black-box classifier $m:\mathcal X\to\{0,1\}$, with predictions $ m(\boldsymbol{x} ) = \mathbb I[\hat{y}_m(\boldsymbol{x} )\ge 0.5]$, where $\hat{y}_m$ are (unobserved) prediction scores. Information sources are restricted to:

\textit{Offline collection of labeled inputs}. We are provided with a set of labeled inputs $\{(\boldsymbol{x} ,m(\boldsymbol{x} ))\}$, that can be grouped by class into
$\mathcal D_c := \{x:\, m(\boldsymbol{x} )=c\}$ for $c\in\{0,1\}$. Our access is restricted to \emph{offline}, i.e., no access to the target classifier \emph{during} reconstruction. 

\textit{One-sided CFs}. Additionally, we receive a set of reject-to-accept, i.e., one-sided, CFs $\mathcal D_{\mathrm{cf}} := \{\boldsymbol{x} ^{\mathrm{cf}}\}$, corresponding to inputs $\boldsymbol{x} $ such that $m(\boldsymbol{x} )=0$ and $\hat{y}_m(\boldsymbol{x} ^{\mathrm{cf}}) \ge 0.5$. One-sided CFs are 
\(\boldsymbol{x} ^\mathrm{cf} \in {\operatorname{\arg\!\min}}_{\boldsymbol{x} ' \in \mathcal X}  \quad
%\(\boldsymbol{x} ^\mathrm{cf} \in \underset{\boldsymbol{x} ' \in \mathcal X}{\operatorname{\arg\!\min}}  \quad
cost(\boldsymbol{x} ,\boldsymbol{x} ') \quad \text{s.t. } \hat{y}_m(\boldsymbol{x} ') \ge 0.5,\)
where $cost(\boldsymbol{x} ,\boldsymbol{x} ')$ is a user-specified cost measuring the magnitude of the input change, for inputs $\boldsymbol{x}$ with prediction $m(\boldsymbol{x})=0$.

Our goal is to reconstruct a surrogate model $\hat{m}$ that is consistent with the
information provided by these limited observations, i.e., to \emph{reduce} disagreement rate $\Pr$ under $\mu$:
\[\Pr_{x\sim\mu}\!\big[m(x)\ne \hat{m}(x)\big].\]

\textit{Observation} (Non-identifiability).
In binary classification with one-sided CF access and limited queries, multiple data-generating distributions can induce identical observable data while differing in their decision boundaries, rendering boundary recovery statistically ill-posed. \emph{Given this non-identifiability, reconstruction quality can only be assessed in terms of behavioral agreement with the target model, rather than recovery of a unique decision boundary.}

Such non-identifiability is well known in classical black-box model reconstruction under finite queries \cite{DBLP:conf/uss/TramerZJRR16,DBLP:conf/cvpr/OrekondySF19}. 
CFs provide richer information than label queries, but a related ambiguity persists under one-sided CF access, as they do not uniquely constrain the underlying decision surface.

Accordingly, we reconstruct distributional class-level behavior: we aim to reduce disagreement between target model and reconstructed model rather than decision boundaries, and model this behavior through a CF-consistent distributional geometry over probability measures.

Although CFs are defined at the individual level, their impact on model reconstruction is inherently distributional, as they constrain the admissible perturbations between class-conditional data distributions.
This naturally motivates a \emph{Wasserstein-geometric} view that measures distributional differences via minimal-cost transport, rather than parametric or pointwise prototype constructions. Prior work has leveraged optimal transport to model counterfactual distributions in a similar distributional setting \cite{DBLP:conf/aistats/YouCNZL25}. However, such approaches focus on generating or characterizing CFs themselves, whereas our objective is fundamentally different: we use CFs as structural constraints for reconstructing class-level decision behavior, which uses a Wasserstein geometry over class-conditional measures.

Recent CF-based reconstruction methods \citep{DBLP:conf/nips/DissanayakeD24,khouna2025counterfactuals} assume that CFs lie near the decision boundary. In practice, however, constraints such as immutable features often force CFs to overshoot the boundary, as illustrated in Figure~\ref{fig_cf_distribution}.
Since the location of CFs relative to the decision boundary is unknown, they cannot be reliably treated as boundary points.
Moreover, when a CF lies on the boundary, assigning it a hard label induces bias (Figure~\ref{fig:overconfident}).
RECAST therefore treats CFs as soft, cross-class constraints, 
which is less sensitive to the location of the CFs relative to the boundary.
Since prototypes aggregate distributional information across all available samples, they average out sample-level noise while preserving the structure relevant to the target’s decisions. This aggregation effectively increases the usable signal per query, mitigating overfitting in low-query regimes.

\subsection{CF-consistent Uncertainty}\label{method_bary}
\begin{definition}[CF-consistent feasible set]
\label{def:feasible}
Let $\mathbb P_c$ be the empirical class-conditional distributions and $\mathbb P_\mathrm{cf}$ the \emph{one-sided} CF distribution. 
For each class $c\in\{0,1\}$, we define
\[
\mathcal C_c
=
\Bigl\{
\mu\in\mathcal P_2(\mathcal X):
W_2(\mu,\mathbb P_c)\le \varepsilon_c,\;
W_2(\mu,\mathbb P_{\mathrm{cf}})\le \delta_c
\Bigr\},
\]
as the \emph{CF-consistent feasible set}, where $\mathcal P_2(\mathcal X)$ denotes the set of probability measures on $\mathcal X$ with finite second moments, and $W_2$ denotes the $2$-Wasserstein distance (App.~\ref{wasserstein_def}). Radii $\varepsilon_c$ and $\delta_c$ encode the uncertainty induced by finite samples, noisy queries, and imperfect CF generation; chosen appropriately, feasible sets $\mathcal{C}_c$ are non-empty.
%where $\delta_0\gg\delta_1>0$ encode the one-sided nature of CFs, so that CFs provide a much stronger geometric constraint for the positive class than for the negative class.
The resulting uncertainty set over joint distributions is
\[
\mathcal U
=
\bigl\{
\mu:\mu(\cdot\mid y=c)\in\mathcal C_c,\;c\in\{0,1\}
\bigr\}.
\]
\end{definition}
This construction induces a lens-shaped ambiguity set in Wasserstein space (Figure~\ref{fig:cf-lens}). 
Distributionally Wasserstein robust optimization (WRO) under Wasserstein ambiguity sets is the minimization of the worst case expected loss over all distributions within a Wasserstein ball around an empirical distribution \citep{mohajerin2018data, blanchet2019robust, gao2023distributionally}. 
%In contrast to conventional WRO, we perform behavioral reconstruction under the uncertainty set induced by the CFs, aiming to control CF-consistent worst-case reconstruction risk in Wasserstein space instead of minimizing a worst-case expected loss.

%The radii $\varepsilon_c$ and $\delta_c$ encode the uncertainty induced by finite samples, noisy queries, and imperfect CF generation.
%CFs constrain class 0 through the minimality objective and class 1 through the flip objective (one sided CF situtation); the Wasserstein feasible set encodes both geometric constraints. 

\paragraph{Relation to Wasserstein robustness.}
Classical Wasserstein-robust optimization (WRO) optimizes over a single Wasserstein ball centered at an empirical distribution, with radius that can be statistically calibrated from data. 
In our setting, however, the ambiguity radii cannot be reliably calibrated from limited one-sided CF data. 
Thus, our CF-consistent feasible set is the \emph{intersection of two Wasserstein balls}: one centered at the class-conditional data distribution and one centered at the counterfactual distribution, where 
$
B_{W_2}(\mathbb P,r)
=
\{\mu\in\mathcal P_2(\mathcal X): W_2(\mu,\mathbb P)\le r\}
$
denotes the Wasserstein ball of radius $r$ centered at $\mathbb P$.
This two-anchor construction encodes not only sampling uncertainty but also \emph{geometric constraints induced by CFs}, yielding a \emph{lens-shaped ambiguity set} rather than a single ball. As a result, robustness in our setting is governed by the \emph{relative geometry} between $\mathbb{P}_c$ and $\mathbb{P}_{\mathrm{cf}}$, rather than by a single ambiguity radius.

\begin{figure}
    \centering
     \includegraphics[clip, trim=0.2cm 1.42cm 0.2cm 1.42cm,width=0.55\linewidth]{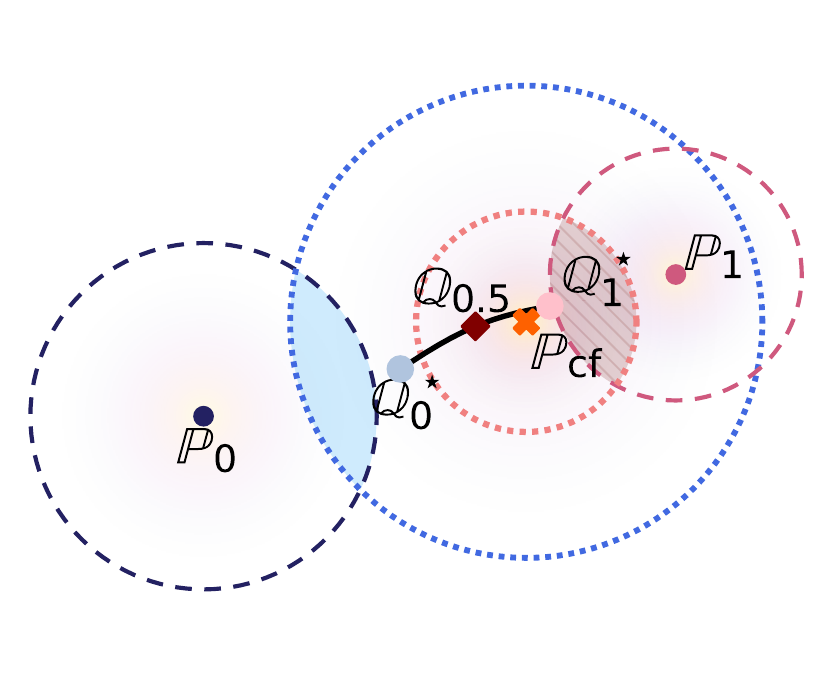}
    \caption{
    \textbf{CF-consistent Wasserstein lens.}
\emph{All points in the figure correspond to probability distributions in Wasserstein space.} 
Each class $c$ is associated with a CF-consistent feasible set
$\mathcal C_c = B_{W_2}(\mathbb{P}_c,\varepsilon_c)\cap B_{W_2}(\mathbb{P}_{\mathrm{cf}},\delta_c)$,
given by the intersection (colored face) of a Wasserstein ball centered at 
$\mathbb{P}_c$ (dashed circle) and a ball centered at  $\mathbb{P}_{\mathrm{cf}}$ (dotted circle).
Although CFs are one-sided, $\mathbb{P}_{\mathrm{cf}}$ acts as shared geometric anchor that constrains both classes.
%This intersection forms a lens-shaped ambiguity set representing all CF-consistent class distributions.
    }
    \label{fig:cf-lens}
\end{figure}

\subsection{Robust Barycenters}\label{subsec:robustbary}

\begin{theorem}[Minimizers of Robust Risk Bounds]\label{thm:minimax}

Let $(\mathbb Q_0,\mathbb Q_1)\in\mathcal P_2(\mathcal X)^2$ denote any pair of 
class-conditional reconstruction distributions, let $\mathcal U\subset\mathcal P_2(\mathcal X)^2$
denote the CF-consistent uncertainty set (Definition~\ref{def:feasible}), which contains all class conditional data generating distributions consistent with the observed samples and CFs. For any such reconstruction $(\mathbb Q_0,\mathbb Q_1)\in\mathcal P_2(\mathcal X)^2$, the worst-case \emph{distributional reconstruction risk}
$
\mathcal R_c(\mathbb{Q}_c)=\sup_{\mu\in\mathcal C_c} W_2^2(\mu,\mathbb{Q}_c),
$
admits a tractable Wasserstein-robust upper bound. The minimizers are the solutions to the optimization problem 
\begin{equation*}\label{objective}
\min_{(\mathbb{Q}_0,\mathbb{Q}_1)}
\sum_{c\in\{0,1\}}
\bigl(
(1-\lambda_c)W_2^2(\mathbb{Q}_c,\mathbb{P}_c)
+
\lambda_c W_2^2(\mathbb{Q}_c,\mathbb{P}_\mathrm{cf})
\bigr).
\end{equation*}
Minimizing this objective is not equivalent to solving the constrained robust problem directly, but yields a geometry-aware surrogate whose minimizers are stable under CF-consistent perturbations.
We measure reconstruction risk in Wasserstein distance to ensure consistency with the geometry of the uncertainty set.
\end{theorem}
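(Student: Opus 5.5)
The plan is to bound the worst-case risk $\mathcal R_c(\mathbb Q_c)=\sup_{\mu\in\mathcal C_c}W_2^2(\mu,\mathbb Q_c)$ by a quantity that depends on $\mathbb Q_c$ only through its distances to the two anchors $\mathbb P_c$ and $\mathbb P_{\mathrm{cf}}$. The key tool is the triangle inequality for $W_2$ on $\mathcal P_2(\mathcal X)$ (App.~\ref{wasserstein_def}) together with the elementary inequality $(a+b)^2\le 2a^2+2b^2$.

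\textbf{Step 1: two separate bounds.} Fix $c$ and take any $\mu\in\mathcal C_c$. Through the first anchor,
\[
W_2(\mu,\mathbb Q_c)\le W_2(\mu,\mathbb P_c)+W_2(\mathbb P_c,\mathbb Q_c)\le \varepsilon_c+W_2(\mathbb Q_c,\mathbb P_c).
\]
Through the second anchor,
\[
W_2(\mu,\mathbb Q_c)\le \delta_c+W_2(\mathbb Q_c,\mathbb P_{\mathrm{cf}}).
\]
Squaring each gives
\[
W_2^2(\mu,\mathbb Q_c)\le 2\varepsilon_c^2+2W_2^2(\mathbb Q_c,\mathbb P_c)
\quad\text{and}\quad
W_2^2(\mu,\mathbb Q_c)\le 2\delta_c^2+2W_2^2(\mathbb Q_c,\mathbb P_{\mathrm{cf}}).
\]

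\textbf{Step 2: combine the bounds.} Since both bounds hold simultaneously, $W_2^2(\mu,\mathbb Q_c)$ is at most their minimum. For any $\lambda_c\in[0,1]$ this minimum is at most the convex combination, so
\[
\mathcal R_c(\mathbb Q_c)\le 2\bigl((1-\lambda_c)\varepsilon_c^2+\lambda_c\delta_c^2\bigr)+2\bigl((1-\lambda_c)W_2^2(\mathbb Q_c,\mathbb P_c)+\lambda_c W_2^2(\mathbb Q_c,\mathbb P_{\mathrm{cf}})\bigr).
\]
The right-hand side is the tractable Wasserstein-robust upper bound. Its first term is a constant independent of $\mathbb Q_c$. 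Summing over $c\in\{0,1\}$, which is legitimate because the constraints in $\mathcal U$ decouple across classes by Definition~\ref{def:feasible}, shows that minimizing the bound over $(\mathbb Q_0,\mathbb Q_1)$ is equivalent to minimizing the stated objective. The factor $2$ and the additive constants do not change the argmin. Because the objective separates across $c$, each $\mathbb Q_c$ is a weighted two-point Wasserstein barycenter of $\mathbb P_c$ and $\mathbb P_{\mathrm{cf}}$.

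\textbf{Step 3: existence and stability of minimizers.} It remains to show that minimizers exist and are stable. Existence follows from standard barycenter theory (Agueh--Carlier): $W_2^2(\cdot,\mathbb P)$ is lower semicontinuous and coercive on $\mathcal P_2$, and since the anchors here are empirical, a minimizer lies on a displacement interpolation (McCann geodesic) between $\mathbb P_c$ and $\mathbb P_{\mathrm{cf}}$. For stability, I would note that if the anchors are perturbed within $\varepsilon_c,\delta_c$, the objective changes by a controlled amount via the same triangle inequality. This is what justifies the phrase ``stable under CF-consistent perturbations''. I would also stress, as the statement does, that this is an upper-bound surrogate and not an equivalence.

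\textbf{Main obstacle.} The hardest point is interpretive. The claim that the minimizers of the bound coincide with the displayed objective requires choosing the convex weight $\lambda_c$ freely, rather than taking the tighter $\min$. I would handle this by proving the bound for every $\lambda_c\in[0,1]$ and presenting $\lambda_c$ as a design parameter encoding CF trust. If that proves awkward, I would fall back on the weaker but honest statement that the objective majorizes $\tfrac12\mathcal R_c$ up to constants. Uniqueness of the minimizer is not claimed, since uniqueness can fail for discrete empirical measures.
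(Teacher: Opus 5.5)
Your proposal is correct and follows the paper's proof: triangle inequality through each anchor, bounding the minimum of the two squared bounds by a convex combination, and splitting the cross terms so that a $\mathbb Q_c$-independent constant separates off. Your inequality $(a+b)^2\le 2a^2+2b^2$ is exactly the paper's Young step with $\eta_1=\eta_2=1$. For that choice the paper's weight $\lambda_c=\alpha_c(1+\eta_2)/\bigl((1-\alpha_c)(1+\eta_1)+\alpha_c(1+\eta_2)\bigr)$ collapses to $\lambda_c=\alpha_c$, so you reach the same barycentric family (every $\lambda_c\in[0,1]$) without the extra parameters, which in the paper only trade a tighter leading coefficient against a larger additive constant.
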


\begin{remark}[Geometric uniqueness]
Without additional regularity assumptions, the minimizer of \cref{thm:minimax} does not need be unique as a probability measure. 
Nevertheless, all minimizers lie on the same Wasserstein displacement geodesic between $\mathbb{P}_c$ and $\mathbb{P}_{cf}$, and therefore lead to a unique CF-identifiable Wasserstein distance geometry.
\end{remark}

%Each surrogate pair $(\mathbb Q_0,\mathbb Q_1)$ induces a classifier.
%In general, one-sided CF access does not identify a unique classifier;
%rather, it identifies a CF-consistent behavioral equivalence class in the sense of
%Definition~\ref{def:behav_equiv}.
%Accordingly, Theorem~\ref{thm:minimax} recovers a unique CF-consistent equivalence class of classifiers,
%even though multiple surrogate pairs $(\mathbb Q_0,\mathbb %Q_1)$ may realize it.
\textbf{Barycentric form.} For each $c\in\{0,1\}$, the optimal $\mathbb{Q}_c^\star$ satisfies
\begin{equation}\label{barycenter}
\mathbb{Q}_c^\star
=
\underset{\mu\in\mathcal \mathcal{P}_2(\mathcal X)}{\operatorname{\arg\!\min}}
\Bigl(
(1-\lambda_c)W_2^2(\mu,\mathbb{P}_c)
+
\lambda_c W_2^2(\mu,\mathbb{P}_\mathrm{cf})
\Bigr)
\end{equation}
i.e., $\mathbb{Q}_c^\star$ is the 2-Wasserstein barycenter of $\mathbb{P}_c$ and $\mathbb{P}_\mathrm{cf}$ (App.~\ref{def_barycenter}). 

%\begin{figure}[t]
%    \centering
%    \includegraphics[width=0.95\linewidth]{Figures/balls.png}
%    \caption{
%    \textbf{CF-consistent Wasserstein lens and barycentric prototypes.}
%    All points in the figure correspond to probability distributions in Wasserstein space. Each class %$c$ is associated with a CF-consistent feasible set
%    $\mathcal C_c = B_{W_2}(\mathbb{P}_c,\varepsilon_c)\cap B_{W_2}(\mathbb{P}_{\mathrm{cf}},\delta_c)$,
%    given by the intersection of a Wasserstein ball centered at the class-conditional distribution
%    $\mathbb{P}_c$ and a ball centered at the one sided counterfactual distribution $\mathbb{P}_{\mathrm{cf}}$. Note that although CFs are generated in a one-sided manner, $\mathbb{P}_{\mathrm{cf}}$ acts as a \emph{shared geometric anchor} that constrains both classes, rather than being treated as labeled data from either class.
%    This intersection forms a \emph{lens-shaped} ambiguity set.
%    The geodesic between barycentric prototypes $\mathbb{Q}_0^\star$ and $\mathbb{Q}_1^\star$ illustrates how classification is performed in Wasserstein space, with $\mathbb{Q}_{0.5}$ denoting the midpoint decision region.
%    }
%    \label{fig:cf-lens}
%\end{figure}

\paragraph{Proof idea.}
%We model finite-sample and one-sided CF access as a Wasserstein uncertainty set over class-conditional
%distributions. 
For each class $c$, a worst-case risk is
$\mathcal R_c(\mathbb{Q}_c)=\sup_{\mu\in\mathcal C_c} W_2^2(\mu,\mathbb{Q}_c)$.
Using standard inequalities in Wasserstein space, this worst-case risk admits a
robust upper bound by a convex combination of
$W_2^2(\mathbb{Q}_c,\mathbb{P}_c)$ and $W_2^2(\mathbb{Q}_c,\mathbb{P}_\mathrm{cf})$.
Optimizing this bound yields the Wasserstein barycenter in
Eq.~(\ref{barycenter}), and separability across classes gives the objective in
Theorem~\ref{thm:minimax}.
The full proof is given in App.~\ref{proof_theorem}.

%Although our uncertainty sets $\mathcal C_c$ induce a natural worst-case reconstruction risk 
%$R_c(Q)=\sup_{\mu\in \mathcal C_c} W_2^2(\mu,Q)$, 
%direct minimization of $R_c$ is intractable. 
%We therefore optimize the barycentric objective in Eq.~(1), which yields a tight, geometry-aware upper bound on $R_c$ and admits an efficient solution. In particular, Eq.~(1) minimizes a tight surrogate of the worst-case Wasserstein risk induced by $\mathcal C_c$. 
Importantly, our formulation does not require explicit specification of the
uncertainty radii $\varepsilon_c$ and $\delta_c$. In our derivation (App.~\ref{proof_theorem}), their relative effect is absorbed into the data-dependent mixing weight $\lambda_c$: selecting an optimal $\lambda_c$ corresponds to selecting optimal radii, and $\lambda_c$  
%Instead, their relative influence is captured implicitly by the data-dependent
%mixing weight $\lambda_c$, which 
determines the location of the CF-consistent
Wasserstein barycenter. 

\paragraph{Canonical instantiation of $\lambda_c$.} $\ $
To better illustrate the role of $\lambda_c$, we rewrite the class-specific objective (Theorem~\ref{objective}) in an equivalent normalized form by dividing by $(1-\lambda_c)$:
\[
\min_{(\mathbb{Q}_0,\mathbb{Q}_1)}\sum_{c\in\{0,1\}}
\left(
W_2^2(\mathbb{Q}_c,\mathbb{P}_c)
+
\frac{\lambda_c}{1-\lambda_c}
W_2^2(\mathbb{Q}_c,\mathbb{P}_{\mathrm{cf}})
\right).
\]
This form makes explicit that the influence of CF information is governed by the ratio $\lambda_c/(1-\lambda_c)$.

Theorem~\ref{thm:minimax} yields a family of robust barycenters
$\{\mathbb Q_c^\star(\lambda_c)\}$.
In classical Wasserstein robust optimization, one would select $\lambda_c$
(equivalently, the ambiguity radius)
to tighten the bound with respect to the true risk.
However, under one-sided CF access and limited data,
such statistical calibration is fundamentally infeasible.

In this regime, $\lambda_c$ is not uniquely identifiable from data within the CF-consistent uncertainty set.
We consequently instantiate $\lambda_c$ through the observable Wasserstein geometry of the three anchor distributions.
Let
$A=W_2^2(\mathbb P_{\mathrm{cf}},\mathbb P_c)$ and
$B=W_2^2(\mathbb P_{\mathrm{cf}},\mathbb P_{1-c})$.
The relative geometry induces the relation
$
\frac{\lambda_c}{1-\lambda_c}=\frac{B}{A+B},
$
yielding the closed-form expression
$\lambda_c=\frac{B}{A+2B}$.

From this perspective, the ratio $\lambda_c/(1-\lambda_c)$ modulates the contribution of one sided CF samples as \emph{soft cross-class} signals:
it increases as $\mathbb P_{\mathrm{cf}}$ is away from the opposite class $1-c$,
and decreases when it is relatively distant from class $c$. The formulation degrades gracefully at boundary cases: when $\mathbb{P}_{\mathrm{cf}} =\mathbb{P}_1$, the weights reduce to $\lambda_1=0$ and $\lambda_0 =1$, CFs carry then no influence on $\mathbb{Q}_0^\star$, and the formulation reduces to standard binary classification. The case $\varepsilon_c = \delta_c = 0$ would require $\mathbb{P}_c= \mathbb{P}_{\mathrm{cf}}$, which contradicts the reject-to-accept CF setting and renders the problem ill-posed. When $\mathbb P_{\mathrm{cf}}$ lies approximately equidistant between the two class distributions, the resulting ratio assigns comparable influence to both classes. To set $\lambda_c$ appropriately, we consider the following stability:

\begin{lemma}[Stability along the barycentric geodesic]\label{lem:stability}
For class $c\in\{0,1\}$, let $(Q_c^\star(\lambda))_{\lambda\in[0,1]}$
denote a constant-speed Wasserstein geodesic between $\mathbb{P}_c$ and $\mathbb{P}_{\mathrm{cf}}$.
Then, function $\lambda_c\mapsto \mathcal R_c(Q_c^\star(\lambda_c))$ is Lipschitz
continuous on $[0,1]$, with Lipschitz constant
\[
L_c \;=\; 2\bigl(\varepsilon_c + W_2(\mathbb{P}_c,\mathbb{P}_{\mathrm{cf}})\bigr)\,W_2(\mathbb{P}_c,\mathbb{P}_{\mathrm{cf}}).
\]
Consequently, small perturbations of $\lambda_c$ lead to
proportionally small changes in the worst-case risk. Proof in App.~\ref{app:stability}.
\end{lemma}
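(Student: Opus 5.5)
The plan is to show that the map $f(\lambda):=\mathcal R_c(Q_c^\star(\lambda)) = \sup_{\mu\in\mathcal C_c} W_2^2(\mu,Q_c^\star(\lambda))$ is a supremum of functions that are all Lipschitz in $\lambda$ with the same constant. A pointwise supremum of uniformly Lipschitz functions is Lipschitz with that constant, and it is finite here because $\mathcal C_c$ is nonempty and $W_2$-bounded. So it suffices to fix $\mu\in\mathcal C_c$ and bound $|g_\mu(\lambda)-g_\mu(\lambda')|$, where $g_\mu(\lambda)=W_2^2(\mu,Q_c^\star(\lambda))$.

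\textbf{Step 1 (geodesic Lipschitz property).} Write $D:=W_2(\mathbb P_c,\mathbb P_{\mathrm{cf}})$. Since $(Q_c^\star(\lambda))$ is a constant-speed geodesic from $\mathbb P_c$ to $\mathbb P_{\mathrm{cf}}$,
\[
W_2\bigl(Q_c^\star(\lambda),Q_c^\star(\lambda')\bigr)=|\lambda-\lambda'|\,D .
\]
By the triangle inequality for $W_2$, the map $\lambda\mapsto W_2(\mu,Q_c^\star(\lambda))$ is therefore $D$-Lipschitz.

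\textbf{Step 2 (squaring).} Factor the difference of squares:
\[
|a^2-b^2| = |a-b|\,(a+b), \qquad a=W_2(\mu,Q_c^\star(\lambda)),\quad b=W_2(\mu,Q_c^\star(\lambda')).
\]
We need a uniform bound on $a$ and $b$. For $\mu\in\mathcal C_c$ and any $\lambda\in[0,1]$,
\[
W_2(\mu,Q_c^\star(\lambda)) \le W_2(\mu,\mathbb P_c) + W_2(\mathbb P_c,Q_c^\star(\lambda)) \le \varepsilon_c + \lambda D \le \varepsilon_c + D .
\]
Hence $a+b\le 2(\varepsilon_c+D)$, and so
\[
|g_\mu(\lambda)-g_\mu(\lambda')|\le 2(\varepsilon_c+D)\,D\,|\lambda-\lambda'| = L_c\,|\lambda-\lambda'| .
\]

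\textbf{Step 3 (passing to the supremum).} Take the supremum over $\mu$. The inequality $g_\mu(\lambda)\le g_\mu(\lambda')+L_c|\lambda-\lambda'|\le f(\lambda')+L_c|\lambda-\lambda'|$ holds for every $\mu$, which gives $f(\lambda)\le f(\lambda')+L_c|\lambda-\lambda'|$. Exchanging the roles of $\lambda$ and $\lambda'$ gives the other direction.

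The only delicate point is the uniform bound on $a+b$. One has to use the $\varepsilon_c$-ball constraint, rather than the $\delta_c$ one, so that the constant matches $L_c$ as stated. One also has to check that the geodesic parametrization starts at $\mathbb P_c$, so that $W_2(\mathbb P_c,Q_c^\star(\lambda))=\lambda D\le D$; this is what produces the stated constant. Everything else is the triangle inequality.
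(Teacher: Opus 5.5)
Your proposal is correct and follows the same route as the paper's proof in App.~\ref{app:stability}: fix $\mu\in\mathcal C_c$, factor $|a^2-b^2|=|a-b|(a+b)$, bound $a+b\le 2(\varepsilon_c+W_2(\mathbb P_c,\mathbb P_{\mathrm{cf}}))$ via the $\varepsilon_c$-ball and the geodesic, control $|a-b|$ by constant speed, and then pass to the supremum (a step you handle a bit more carefully than the paper does). The orientation check you flag is harmless but unnecessary, since $W_2(\mathbb P_c,Q_c^\star(\lambda))\le W_2(\mathbb P_c,\mathbb P_{\mathrm{cf}})$ holds whichever end the geodesic starts from.
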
 
As a direct consequence, for any (unobservable) optimizer $\lambda_c^\star \in [0,1]$
of the robust upper bound along the barycentric geodesic, the suboptimality of a chosen
$\lambda_c$ is bounded as
\[
\mathcal R_c\!\left(Q_c^\star(\lambda_c)\right)
-
\mathcal R_c\!\left(Q_c^\star(\lambda_c^\star)\right)
\;\le\;
L_c\,\lvert \lambda_c - \lambda_c^\star \rvert .
\]
This result ensures that the canonical instantiation remains behaviorally stable even under under small variations of $\lambda_c$.

\begin{figure}
\centering
\includegraphics[width=.99\linewidth]{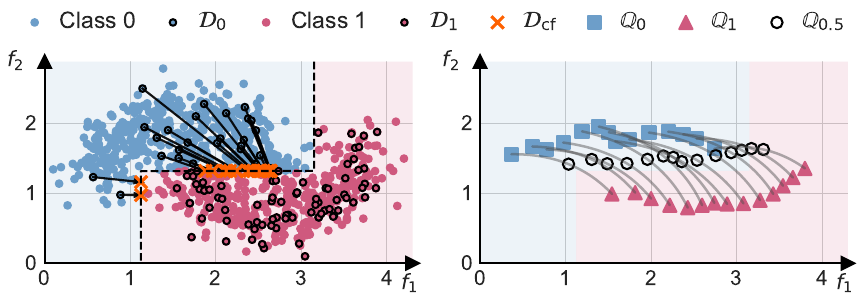}
\caption{\textbf{Top}: Full original dataset and decision tree classification (blue and pink); CFs as x (orange); points available for reconstruction circled (black). \textbf{Bottom}: Decision region between barycenters:
%, for simplification we illustrate the standard Wasserstein transport path. Colors indicate barycenters 
$\mathbb{Q}_0$ (blue), $\mathbb{Q}_1$ (pink); black circles midpoints along the optimal transport path (grey lines), indicating the decision region $\mathbb{Q}_{0.5}$.}\label{mid_barycenter}
\end{figure}

\subsection{Barycentric Prototype Classifier}\label{subsec:cls}
\begin{definition}[CF-consistent behavioral equivalence]
\label{def:behav_equiv}
Two different classifiers $m_1,m_2$ are \emph{behaviorally equivalent} (w.r.t.\ one-sided CF access), if
$
\forall\,\mu\in\mathcal U, 
\Pr_{x\sim\mu}\!\big[m_1(x)\ne m_2(x)\big]=0,
$
where the probability denotes the disagreement rate of the two different models under $\mu$.
\end{definition}
The CF-consistent uncertainty set $\mathcal U$ contains multiple class-conditional
distributions that are indistinguishable from the observed data.
\emph{Any valid pointwise decision rule must therefore be invariant over all
$(\mu_0,\mu_1)\in\mathcal U$ of class-conditional data generating distributions, and cannot depend on quantities (such as densities or
likelihoods) that vary within this set.} In contrast, Wasserstein distances are invariant to CF-unidentifiable variations,
making $W_2(\delta_x,\hat{\mathbb Q}_c)$ a natural CF-identifiable score. This is because CF-consistent uncertainty constrains only how probability mass can be transported in the input space, while leaving its local density unspecified; density-based quantities vary under such unidentifiable rearrangements, whereas Wasserstein distances depend solely on the induced transport geometry.

Given the reconstructed class prototypes$(\hat {\mathbb Q}_0, \hat {\mathbb Q}_1)$, a CF-identifiable decision statistic is given by the 2-Wasserstein
distance from a Dirac mass to the class prototypes:
\begin{equation}\label{eq:clsrule}
\hat m(\boldsymbol{x})
\;:=\;
\arg\min_{c\in\{0,1\}} W_2\!\left(\delta_{\boldsymbol{x}}, \hat {\mathbb Q}_c\right),
\end{equation}
where $\delta_{\boldsymbol{x}}$ denotes the Dirac measure at an input $\boldsymbol{x}$.
This classifier should be interpreted as a canonical representative of the
CF-consistent behavioral equivalence class, rather than a uniquely identifiable
ground-truth decision rule. \emph{Importantly, the proposed framework does not restrict the form of the final decision mechanism, provided it operates solely on CF-identifiable geometric quantities.}

The stability of the barycentric classifier follows directly from the Wasserstein geometry.
By Corollary~\ref{agreement} in App.~\ref{proof_theorem}, the induced pointwise decision score is Lipschitz continuous with respect to perturbations of the prototype distributions under $W_2$. This stability, combined with the distributional smoothing induced by the barycentric aggregation, yields a uniform bound on the prediction disagreement between the classifiers.
Thus, for any $\mathbb Q_c'$  distribution in CF-consistent feasible set $\mathcal C_c$:
$W_2(\mathbb Q_c,\mathbb Q_c') \le \sqrt{\mathcal R_c(\mathbb Q_c)}$,
and small reconstruction risks $\mathcal R_c(\mathbb Q_c)$ imply small variations in the classification behavior across all CF-consistent realizations.
%In particular, this yields a uniform bound on the prediction disagreement between the resulting classifiers.

This classification rule induces a decision region in Wasserstein space. Let $T$ denote an optimal transport map from $\hat{\mathbb Q}_0$ to $\hat{\mathbb Q}_1$. The
Wasserstein geodesic between the two class prototypes is given by the displacement
interpolation
$
\mathbb Q_\gamma := \bigl((1-\gamma)\mathbf{Id}+\gamma T\bigr)_{\#}\hat{\mathbb Q}_0, 
\quad\gamma\in[0,1].$
The decision region is characterized by the midpoint of this geodesic,
$
\mathbb Q_{0.5}
=
\Bigl(\tfrac12\mathbf{Id}+\tfrac12 T\Bigr)_{\#}\hat{\mathbb Q}_0,
$
which represents the distribution equidistant from both class prototypes in
Wasserstein distance (Figure~\ref{mid_barycenter}).
In this geometric view, $\hat{\mathbb Q}_0$ and $\hat{\mathbb Q}_1$ act as distributional
\emph{prototypes}, summarizing the CF-consistent class-conditional
geometry. Classification is performed by comparing the Wasserstein distance of a
point to these prototypes, yielding a prototype-based classifier in Wasserstein
space. For the overall geometric relationship between CF-consistent lens and
barycentric prototypes see Figure~\ref{fig:cf-lens}.

\subsection{Threshold-invariant Fairness Diagnostic}\label{subsec:fairness}
%Let $\mathbb{P}$ denote the data-generating distribution over $(X,Y,S)$, where $S\in\{0,1\}$ is the sensitive attribute, $X$ the random input variable, and $Y\in\{0,1\}$ the class label as a random variable with realizations $y\in\{0,1\}$.
% New approach
RECAST also supports fairness auditing by adapting a threshold-invariant fairness diagnostic in the perspective of Wasserstein geometry. 
Given the barycentric class prototypes $(Q_0^\star,Q_1^\star)$, we define a 
Wasserstein-geometric score $p:\mathcal X\to(0,1)$ as:
\[
p(\boldsymbol{x})
=
\sigma\!\Bigl(
W_2(\delta_{\boldsymbol{x}},Q_0^\star)-W_2(\delta_{\boldsymbol{x}},Q_1^\star)
\Bigr)
\]
where $\sigma$ denotes the sigmoid function. 
Based on \citet{DBLP:conf/uai/Chen020}, we adapt Threshold Invariant Demographic Parity (TIDP) and Threshold Invariant Equalized Odds (TIEO), i.e., equal selection rates and equal prediction accuracy across sensitive attributes independent of a decision threshold, in the perspective of Wasserstein geometry.
In contrast to previous work, our score operates on entire score distributions, capturing disparities that persist across all decision thresholds.
We quantify group-level disparities by comparing the distributions of this score across sensitive groups that share a sensitive attribute $S \in \{0,1\}$. We compute the Wasserstein-1 distance between the score distributions conditioned on different values of $S$ to assess TIDP, 
and between score distributions conditioned on both $S$ and the label $Y\in\{0,1\}$ to assess TIEO. 
As the scores are induced by barycentric prototypes defined over the CF-consistent feasible sets, the resulting fairness diagnostics are evaluated with respect to a shared uncertainty-aware decision geometry and quantify how different groups are positioned relative to the barycentric decision geodesic. Equations in App.~\ref{app:fairness}.

%To sum up, in the one-sided CF setting, the target decision boundary is fundamentally non-identifiable, and reconstruction cannot be understood as recovering a unique ground-truth classifier.
%We therefore evaluate reconstruction quality in terms of behavioral agreement with the target model.
%From this perspective, minimizing Wasserstein reconstruction risk encourages stability of the decision behavior under CF-consistent perturbations, which in turn manifests as reduced prediction disagreement.
\paragraph{Summary.} 
Under one-sided CF access the target decision boundary is fundamentally non-identifiable, so reconstruction cannot recover a unique ground-truth classifier. RECAST therefore (i) constructs a CF-consistent Wasserstein uncertainty set,
%from the empirical class distributions $\mathbb P_c$ and the CF anchor $\mathbb P_{\mathrm{cf}}$, 
(ii) computes $\hat{\mathbb Q}_c$ as Wasserstein barycenters, %balancing $\mathbb P_c$ and $\mathbb P_{\mathrm{cf}}$ via the geometric mixing weight $\lambda_c$, 
and (iii) predicts with the classification rule, Eq. (\ref{eq:clsrule}). %$\hat m(x)=\arg\min_c W_2(\delta_x,\hat{\mathbb Q}_c)$.
Minimizing the resulting Wasserstein reconstruction risk promotes stability of decision behavior under CF-consistent perturbations, yielding reduced prediction disagreement.

\section{Experiments}\label{subsec:expresults}
We evaluate RECAST across four real-world datasets, three target model families, and multiple CF generation methods, assessing fidelity, robustness, and fairness preservation.

%\todo[inline, ,color=nice!30]{Other prototypes 
%Including transport-based Wasserstein distance makes sense in CE scenarios as these aim to transport a query sample toward the decision boundary. Prototype methods with other distance metrics lack this alignment, which is why we initially excluded them. However, we can include them if that would be helpful for the reviewers.}
%We conduct a series of experiments to evaluate the effectiveness of our proposed method for reconstructing binary classifiers using Wasserstein barycenters as class prototypes.%\todo[inline]{anonymous git}
%\subsection{Experimental Design}
%\subsection{Experiment Setup}
%\paragraph{Evaluation}
%To measure the agreement rate between the target model $m$ and the surrogate
%$\hat m$ on a reference set $\mathcal{D}_{\text{ref}}$, we compute \emph{fidelity}~\cite{DBLP:journals/corr/abs-2009-%01884}:
%\[
%   \text{Fid}_{m,\mathcal{D}_{\text{ref}}}(\hat{m}) = \frac{1}{|\mathcal{D}_{\text{ref}}|} \sum_{\boldsymbol{x} \in \mathcal{D}_{\text{ref}}} \mathbb{I}_{[0,1]} \left[\hat{y}_m(\boldsymbol{x})= \hat{y}_{\hat{m}}(\boldsymbol{x}) \right]. 
%\]
%Similar to behavioral consistency, which assesses how reliably the reconstructed model reproduces behavioral pattern of the target model \citep{xu2022student}, we also report SHAP values \citep{lundberg2017unified}.

\subsection{Experimental Setup}

\paragraph{Data.} 
We study four publicly available binary classification datasets, that are often used in literature and cover a variety of properties, e.g., dimensionalities, dataset size, and tasks: Adult Income~\cite{adult}, COMPAS~\cite{propublica2016compas}, %DCCC~\cite{yeh2009default}, 
HELOC~\cite{fico2018}, and California Housing~\cite{californiahousing}. We include exploratory cases on additional data modalities in App. \ref{app:datamod} and \ref{sec:llm_case_study}.
%For a proof-of-concept with visual data we refer to App. \ref{app:datamod}. Our formulation also extends to language models in App. \ref{sec:llm_case_study}.

%We compare our approach against two SOTA methods: the first is an attack technique for classifier reconstruction without primer knowledge to training data, as described in \citet{DBLP:journals/corr/abs-2009-01884}, referred to as \textit{Baseline 1} where counterfactuals are treated as normal samples; the second is a method that modifies the standard entropy loss to incorporate counterfactual explanations, proposed by \citet{DBLP:conf/nips/DissanayakeD24}, referred to as \textit{Baseline 2}. 

\paragraph{Baselines.} 
To the best of our knowledge, there is limited prior work on reconstruction restricted data access and only one-sided CFs (\cref{tab:relatedworks}). As such, we adapt three representative baselines to \emph{offline-only}, i.e., no querying during reconstruction. \citet{DBLP:journals/corr/abs-2009-01884}, referred to as \textit{SAMPLES}, incorporates CFs as ordinary samples. \textit{CCA},  Counterfactual Clamping Attack, modifies entropy loss to explicitly incorporate CFs \citep{DBLP:conf/nips/DissanayakeD24}.
TRA \citep{khouna2025counterfactuals} assumes prior knowledge that the target classifier is a decision tree with axis-parallel splits, and is therefore included only when such structural information is available; in contrast, RECAST and the other baselines operate without access to this form of model-specific prior. For all baselines, we use hyperparameters recommended in the original papers.

\iffalse
\paragraph{Barycentric prototype computation.}
We compute the barycenters $\mathbb Q_c^\star$ using the optimization procedure
in Algorithm~\ref{alg:proto_opt} in App.~\ref{App.:wasserstein}. To reduce computational cost,
we optimize an entropically regularized (Sinkhorn-smoothed) surrogate of the
Wasserstein barycenter objective as a tractable numerical approximation~\cite{chizat2020faster}.
To characterize the convergence and numerical stability of this procedure, we analyze the
first-order optimality conditions of the resulting (regularized) barycentric
objective. Details are in App.~\ref{app:sinkhorn}.
\fi

\paragraph{Barycentric prototype computation.} 
We compute the barycenters $\mathbb Q_c$ using optimization Algorithm~\ref{alg:proto_opt} in App.~\ref{appendix:wasserstein}.
We optimize an entropically regularized (Sinkhorn-smoothed) surrogate
of the Wasserstein barycenter objective for computational efficiency,
following standard practice in optimal transport~\cite{chizat2020faster}, theoretical guarantees and an ablation regarding the blur parameter are included in App.~\ref{app:sinkhorn} and ~\ref{app:sinkhornblur}.
All solver settings and hyperparameters are kept fixed across experiments. All implementation details, including initialization, solver settings, and
hyperparameters also in App.~\ref{app:experimental}. Code is available online.\footnote{ \url{https://github.com/zhaoxuan00707/ce_reconstruction}} 

\paragraph{Reconstruction.} 
As target models, we study \emph{neural networks (MLP), logistic regression (LR), and tree-based classifiers (DT)} trained on 80\% of the original datasets, which remains unknown during the reconstruction phase. In each experiment, we randomly sample a certain query size, e.g., 100 instances from class 0 and 100 from class 1, as $\mathcal{D}_0$, $\mathcal{D}_1$, resp. From class 0 samples various CF methods are used to generate CFs $\mathcal{D}_{\mathrm{cf}}$. 
We evaluate the reconstructed models on a held-out reference set $\mathcal D_{\text{ref}}$, disjoint from the reconstruction data.

We use \emph{fidelity}~\cite{DBLP:journals/corr/abs-2009-01884} rather than accuracy as our primary metric: a surrogate that faithfully mimics the target model's decision, including its systematic errors, should score highly regardless of ground-truth label agreement, since behavioral reconstruction, not predictive performance is the relevant objective for reconstruction (see App. \ref{app:accfidelity} for a joint comparison). 

Fidelity between the target model $m$ and the surrogate
$\hat m$ on a reference set $\mathcal{D}_{\text{ref}}$ is defined as:
\[
\text{Fid}_{m,\mathcal{D}_{\text{ref}}}(\hat{m}) = \frac{1}{|\mathcal{D}_{\text{ref}}|} \sum_{\boldsymbol{x} \in \mathcal{D}_{\text{ref}}} \mathbb{I}_{[0,1]} \left[m(\boldsymbol{x})= \hat{m}(\boldsymbol{x}) \right]. 
\]

We repeat each experiment ten times with different random seeds and report the mean and variance of our method. In addition to the balanced setting, we conduct
supplementary experiments with imbalanced $\mathcal{D}_0$ and $\mathcal{D}_1$;
see results in App.~\ref{app:experimental}. 

\paragraph{CF Generation.} 
We evaluate several CF generation methods, including MCCF \cite{DBLP:journals/corr/abs-1711-00399}, which seeks CFs with minimal input changes. In our implementation, we add an $\ell_1$ regularization term to encourage sparse input changes.
%, focusing on sparsity, actionability, realism, and robustness. 
DiCE \cite{DBLP:conf/fat/MothilalST20} ensures actionability by enforcing immutable features. To improve robustness under model shifts, we adopt ROAR \cite{DBLP:conf/nips/UpadhyayJL21}. 
For realism, we apply 1-Nearest-Neighbor from the desired class and C-CHVAE \cite{DBLP:conf/www/PawelczykBK20}, which uses variational autoencoders. Details in App.~\ref{app:experimental}.

\paragraph{Robustness Evaluation Protocol.} Since RECAST is derived under CF-consistent uncertainty, we evaluate not only average fidelity under i.i.d.\ test data, but also robustness under distributional shifts. Although our uncertainty is specified over training distributions, its effect manifests as sensitivity to distributional shifts during test. We thus evaluate robustness by probing fidelity under perturbations of the test distribution, which operationalizes different plausible realizations of the underlying $\mathbb{Q}_c$. Specifically, for each reference test set $\mathcal{D}_{\mathrm{ref}}$, we construct a family of shifted test sets
$
\mathcal{D}_{\mathrm{ref}}=\{\boldsymbol{x}+\epsilon:\ \boldsymbol{x}\in \mathcal{D}_{\mathrm{ref}},\ \epsilon\sim\mathcal N(0,\tau^2 I)\},
$
with noise levels $\tau\in\{0.05,0.1,0.2,0.4\}$.

Although average fidelity is a widely used metric for model reconstruction,
it can be dominated by samples far from the decision threshold, whose predictions
are inherently stable under perturbations.
Consequently, high overall fidelity may mask disagreement in decision-sensitive
regions.
We thus also evaluate robustness on a near-threshold subset where the target model likely exhibits higher uncertainty.
Although the true decision boundary is not identifiable, 
$|\hat{y}_m(\boldsymbol{x})-0.5|\le\gamma$ captures inputs most susceptible to distributional
perturbations~\cite{DBLP:conf/icml/Blanchet0LL24}.
We define the near-threshold set as
$
\mathcal{D}_{\mathrm{near}}
=\{\boldsymbol{x}\in \mathcal{D}_{\mathrm{ref}}:\ |\hat{y}_m(\boldsymbol{x})-0.5|\le\gamma\},
$
with $\gamma=0.05$.
We report fidelity on both the full set $\mathcal{D}_{\mathrm{ref}}$ and on $\mathcal{D}_{\mathrm{near}}$. Note that access to the target model’s continuous output scores is used only during the evaluation phase to construct diagnostic subsets $\mathcal{D}_{\mathrm{near}}$, and is neither available nor used during the reconstruction phase.
We additionally evaluate robustness under cross-domain covariate shift, details of that protocol are in App.~\ref{app:adddistribution}.

\begin{figure*}[!th]
\centering
\captionof{table}{Fidelity. Multilayer perceptron (MLP), logistic regression (LR), decision tree (DT) target models,  low-query size 100.
RECAST consistently achieves higher (in one case comparable) fidelity across datasets and target models. TRA only feasible with DT target.
}
\label{tab_average_fidelity_all}
\centering
\resizebox{0.99\linewidth}{!}{
\begin{tabular}{lccccccccccccc}
\toprule
\multirow{2}{*}{\textbf{Dataset}} 
& \multicolumn{3}{c}{\textbf{MLP (Target Model)}} 
& 
& \multicolumn{3}{c}{\textbf{LR (Target Model)}} 
& 
& \multicolumn{4}{c}{\textbf{DT (Target Model)}} \\
\cmidrule(lr){2-4} \cmidrule(lr){6-8} \cmidrule(lr){10-13}
& \textbf{SAMPLES} & \textbf{CCA} & \textbf{RECAST (Ours)}
& 
& \textbf{SAMPLES} & \textbf{CCA} & \textbf{RECAST (Ours)}
& 
& \textbf{SAMPLES} & \textbf{CCA} & \textbf{RECAST (Ours)} & \textbf{TRA} \\
\midrule
Adult In.
& $0.801 \pm 0.035$ & $0.843 \pm 0.020$ & $\mathbf{0.908 \pm 0.034}$
& & $0.909 \pm 0.008$ & $0.816 \pm 0.025$ & $\mathbf{0.913 \pm 0.033}$
& & $0.819 \pm 0.017$ & $0.827 \pm 0.018$ & $\mathbf{0.894 \pm 0.040}$ & $0.842 \pm 0.020$ \\
COMPAS
& $0.415 \pm 0.060$ & $0.765 \pm 0.020$ & $\mathbf{0.855 \pm 0.022}$
& & $0.543 \pm 0.004$ & $0.785 \pm 0.026$ & $\mathbf{0.821 \pm 0.047}$
& & $0.396 \pm 0.003$ & $0.657 \pm 0.027$ & $\mathbf{0.869 \pm 0.049}$ & $0.826 \pm 0.007$ \\
HELOC
& $0.427 \pm 0.121$ & $0.644 \pm 0.057$ & $\mathbf{0.696 \pm 0.078}$
& & $0.551 \pm 0.013$ & $0.670 \pm 0.013$ & $\mathbf{0.798 \pm 0.025}$
& & $0.620 \pm 0.008$ & $0.702 \pm 0.023$ & $\mathbf{0.772 \pm 0.029}$ & $0.702 \pm 0.024$ \\
Housing
& $0.459 \pm 0.102$ & $\mathbf{0.716 \pm 0.070}$ & $0.712 \pm 0.121$
& & $0.477 \pm 0.001$ & $0.656 \pm 0.014$ & $\mathbf{0.789 \pm 0.094}$
& & $0.554 \pm 0.012$ & $0.619 \pm 0.034$ & $\mathbf{0.774 \pm 0.149}$ & $0.654 \pm 0.015$ \\
\bottomrule
\end{tabular}}

\vspace{0.8em}
    \centering
    \includegraphics[width=.99\linewidth]{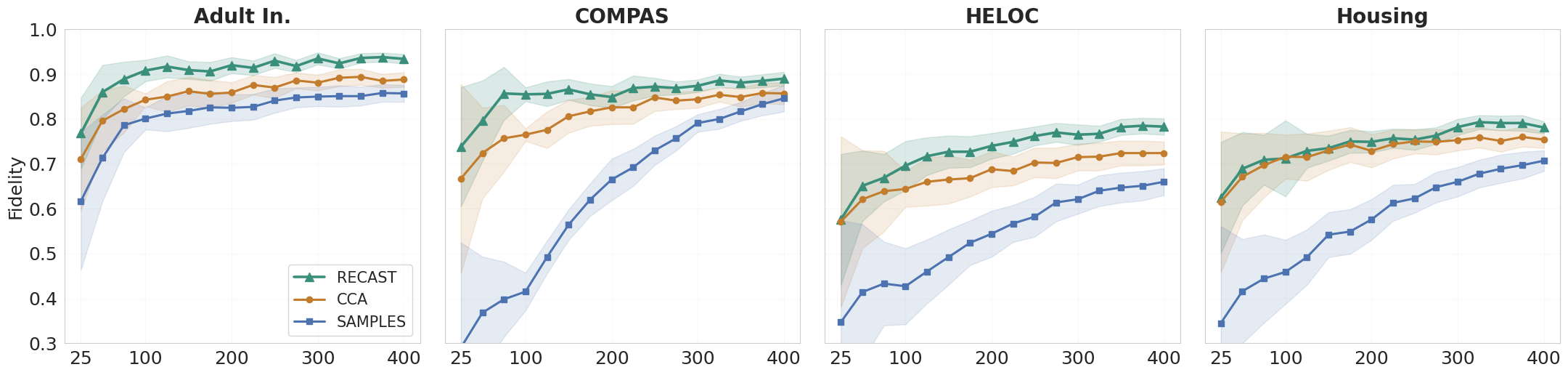}
    \captionof{figure}{
Fidelity (mean $\pm$ std) of surrogate models on real-world datasets under varying query sizes.
RECAST with superior performance.
}
    \label{fig_query}
    \end{figure*}
\paragraph{Fairness Diagnostic Protocol.}  For each distribution-shift level $\tau$, we report the absolute difference between the fairness diagnostics for target model $m$ and reconstructed model $\hat m$
 over the corresponding shifted reference set $\mathcal{D}_{\mathrm{ref}}$. We show how well the reconstructed models preserve the original decision disparities, and assess their use for auditing.

\paragraph{Ablation Study.}
We conduct an ablation study on different prototype constructions, distance metrics, and CFs integration strategies in RECAST with all details in App.~\ref{app:ablation_ori}.

\subsection{Reconstruction Fidelity}
 Table~\ref{tab_average_fidelity_all} summarizes \textit{fidelity} results across target classifiers, with query size $100$ and MCCF as CF generation method. In almost all cases, our method achieves superior fidelity to \textit{SAMPLES} and \textit{CCA}, comparable in one. We also study how the amount of available training data influences the performance of the reconstructed models with query sizes from 25 to 400 in Figure~\ref{fig_query}. Notably, our approach demonstrates a clear advantage when the query size is small, highlighting its efficiency in low-query regimes.

 %, whereas \textit{CCA} exhibits noticeable degradation due to overfitting. %To assess the impact of class imbalance, we additionally conduct experiments with unbalanced class 0 and class 1 sample sizes; the results are reported in \cref{app:experimental}.

%\begin{figure*}[t]
%    \centering
%    \includegraphics[width=.8\linewidth]{Figures/orig_vs_cf_adult_lr_multi_cf_kde.pdf}
%    \caption{
%Kernel density estimates of target model predicted scores $\hat{y}_m(x)$ for both class inputs, CFs; varying CF methods.
%Concentration well above decision threshold 0.5 means CFs  overshoot the boundary, confirming RECAST's approach of avoiding near-boundary assumptions.
%Note that overshoot observed in prediction scores does not imply geometric proximity of $\mathbb{P}_{\mathrm{cf}},\mathbb{P}_1$ in Wasserstein space.
%}\label{fig_cf_distribution}
%\end{figure*}

\subsection{Effect of CF Geometry}  %From Table~\ref{table:ce}, we observe that the different properties of CFs directly influence the fidelity of model reconstruction. CFs that are more plausible, i.e., lie closer to the data manifold, lead to more accurate alignment to the target model. Our method remains stable across all CF generation strategies, although the fidelity score is slightly higher for SAMPLES, when DiCE is used to generate the CFs, RECASTs performance is quite similar around $0.92$, only deviating by $0.01$. We attribute this to our adaptive design, effectively summarizing the sample distribution. This trend is visualized in Figure~\ref{fig_cf_distribution}, where we observe that CF generation methods produce CFs that lie far beyond the decision boundary, yielding prediction scores well above 0.5. This violates the common assumption that CFs are close to the boundary, explaining the performance drop of baselines, while our approach remains robust as it does not rely on this assumption.

RECAST consistently achieves high fidelity across CF generation strategies, see Table~\ref{table:ce}.
Baseline methods exhibit substantially larger fluctuations, indicating sensitivity to the properties of CFs.
Unlike baselines, RECAST neither interprets CFs as true samples (SAMPLES) nor assumes proximity to the decision boundary (CCA), properties that depend on the generator, illustrated in Figure~\ref{fig_cf_distribution}.

\begin{table}[b]
\caption{
Fidelity, varying counterfactual generation methods on Adult,
query size 100, target MLP.
RECAST shows robustness with high fidelity across diverse CF generators.
}\label{table:ce}
\centering
\resizebox{0.95\linewidth}{!}{
\begin{tabular}{lccc}
\toprule
\multirow{2}{*}{\textbf{CF Generation}} 
& \multicolumn{3}{c}{\textbf{Baselines}} \\
\cmidrule(lr){2-4}
& SAMPLES & CCA & RECAST (Ours) \\
\midrule
MCCF  & $0.793$ & $0.893$ & $\mathbf{0.913}$ \\
DiCE  & $\mathbf{0.933}$ & $0.903$ & $0.923$ \\
ROAR  & $0.735$ & $0.735$ & $\mathbf{0.910}$ \\
1-Nearest-Neighbor & $0.712$ & $0.835$ & $\mathbf{0.916}$ \\
C-CHVAE   & $0.357$ & $0.817$ & $\mathbf{0.923}$ \\
\bottomrule
\end{tabular}}
\end{table}

%As shown in Figure~\ref{fig_cf_distribution}, CF generation methods exhibit substantial variability in the prediction scores assigned by the target model, with some overshooting the decision threshold.
%CCA assumes that CFs lie close to the decision boundary and leverages such near-boundary
%samples to shape the surrogate decision surface.
%When CFs instead yield prediction scores far above $0.5$, this assumption is violated
%and reconstruction performance deteriorates. By contrast, RECAST neither interprets CFs as true samples nor assumes proximity to the
%decision boundary.
%Instead, it incorporates CF information through CF-consistent distributional geometry,
%which enables stable reconstruction even under heterogeneous CF generation behaviors.

\subsection{Robustness under Perturbations}
We study robustness along two axes:  \emph{behavioral stability} under additive perturbations of the test distribution and \emph{generalization} under realistic distribution shift across domains. Both test whether RECAST's behavioral agreement with the target model holds beyond the reconstruction setting.
%, to understand performance impact of noise, rather than adversarial robustness of a learned classifier.
Disagreement between a reconstructed surrogate and the target model may arise when perturbations cause inputs to cross decision surfaces, an effect most pronounced near the decision threshold.
As the perturbation strength increases, fidelity decreases for all methods (see Figure~\ref{fig:robustness_adult}).
Still, RECAST degrades substantially more gracefully than the baselines, and SAMPLES and CCA suffer pronounced performance drops even under moderate noise.
This gap is even more pronounced in the near-threshold region $\mathcal{D}_{\mathrm{near}}$ most sensitive to perturbations.

\begin{figure}
    \centering
\includegraphics[width=.99\linewidth]{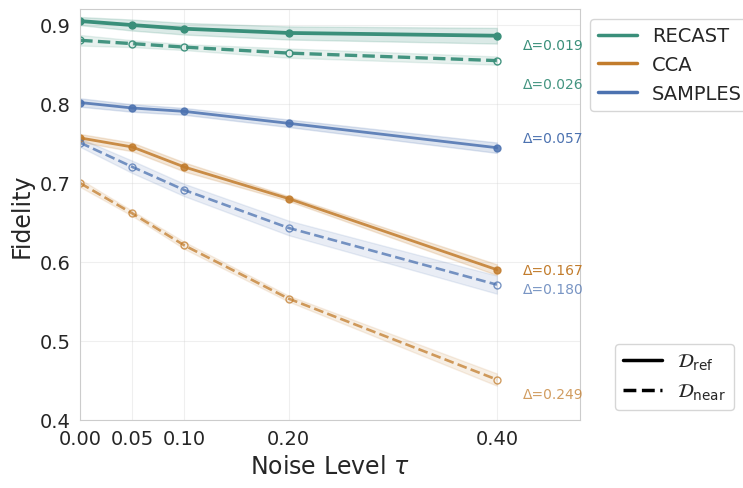}
\caption{
Robustness to additive noise on Adult.
Fidelity as a function of the noise level $\tau$ for both full reference set $\mathcal{D}_{\mathrm{ref}}$ and  near-threshold subset $\mathcal{D}_{\mathrm{near}}$, which captures decision-sensitive inputs.
$\Delta$ denotes the fidelity drop between $\tau = 0$ and $\tau = 0.4$.
}
    \label{fig:robustness_adult}
\end{figure}
Along the second axis, we evaluate robustness under realistic distribution shift using Folktables ACSIncome dataset \citep{ding2021retiring}, training on California 2018 data and evaluating on Michigan 2014, a setting that combines both geographical and temporal variation (see App.~\ref{app:adddistribution},\cref{tab:folktables_query_acc_fid}). 
RECAST consistently achieves highest fidelity across query budgets (100 and 150), while SAMPLES leads on accuracy but lags notably on fidelity, consistent with the distinction drawn above, where accuracy and fidelity capture fundamentally different objectives. These results confirm that RECAST maintains strong behavioral consistency with the target model even under substantial distribution shifts.

\subsection{Fairness Diagnostics} 

\emph{The goal of the proposed fairness diagnostics is to assess whether a reconstructed surrogate preserves the fairness-related behavior of the target model, under CF-consistent distributional uncertainty.} These diagnostics are thus intended solely for auditing purposes and do not constitute fairness performance or debiasing claims. Figure~\ref{fig:fairness} reports the fairness diagnostic error between the reconstructed surrogate and the target model under increasing perturbations. Lower values indicate that the surrogate more faithfully preserves the fairness characteristics of the target model. RECAST consistently exhibits smaller diagnostic consistency gap and more graceful degradation with noise.% than baselines.

\subsection{Effect of Query Budget}

RECAST is designed for limited query access and
one-sided CFs, where direct reconstruction of the target model is fundamentally sample-inefficient and the decision boundary is not statistically identifiable. To make this regime explicit, we additionally study how reconstruction performance varies with available query budget.
%\FloatBarrier

\begin{figure}
    \centering
\includegraphics[width=.99\linewidth]{Figures/fairness.pdf}
\caption{
Absolute fairness (DTIDP (solid) and DTIEO (dashed)) differences between reconstructed and target model as a function of noise level $\tau$. Lower values mean more faithful preservation of target model fairness characteristics.
}
    \label{fig:fairness}
\end{figure}

Specifically, we compare RECAST with a no CFs baseline over number of queried labeled instances, and identify the crossover point at which direct supervised reconstruction becomes competitive (Figure~\ref{fig:crossover}, App.~\ref{app:crossover}).
This analysis reveals a clear practical boundary: in low-query regimes, CFs provide crucial decision-relevant information that cannot be recovered from samples alone, enabling RECAST to substantially outperform both CF-based and non-CF baselines.
As the query budget increases and empirical samples begin to adequately cover decision-relevant regions, the advantage of CF-based reconstruction naturally diminishes, and simpler supervised models become sufficient. Note that with sufficient data, the Wasserstein mixing weights could be calibrated via classical WRO methods, but this lies beyond the scope of the low-query regime studied here.

In privacy-sensitive auditing contexts, where datasets are small, access is restricted, and queries are costly, such high-capacity approaches become infeasible, making lightweight, geometry-driven reconstruction methods such as RECAST particularly relevant in practice.
We acknowledge that Wasserstein distance can be computationally demanding in large-scale settings. However, in the low-data regimes considered here, this cost is not a practical bottleneck. In runtime experiments (App.~\ref{app:runtime}), we observe training times near linear in query size.

%\subsection{Ablation Study}

%To identify optimal hyperparameters for our method, we first conduct a comprehensive grid search over learning rate, number of training epochs, and the regularization weight $\beta$. %Specifically, we search over learning rates 
% $\{1 \times 10^{-3},\ 5 \times 10^{-3},\ 1 \times 10^{-2}\}$, epochs $\{100, 200, 400\}$, and $\beta \in \{ 0.1, 0.3, 0.5, 1.0\}$
%\{1e-3, 5e-3, 1e-2\}, epochs $\{100, 200, 400\}$, and $\beta \in \{ 0.1, 0.3, 0.5, 1.0\}$
% Each configuration is evaluated over 5 independent runs to account for stochasticity and we report averaged metrics. For Adult dataset (Query Size=100), the grid search results suggest $\beta$=0.1, lr=1e-2, epochs=200. Once the best-performing hyperparameters are selected based on total loss, we perform an additional ablation to analyze the role of the regularization components, see Table~\ref{tab:ablation}. 
%This includes (i) removing the regularization term entirely (i.e., $\beta=0$), (ii) replacing the adaptive weighting parameter $\lambda_c$ with a fixed value of 0.5, and (iii) disabling both adaptive $\lambda_c$ and regularization (i.e., $\beta = 0$, $\lambda_c = 0.5$). Thereby, we assess individual and joint contributions of the proposed regularization and adaptive weight. Results indicate best performance with adaptive $\lambda_c$ and included regularization.

\section{Conclusion and Future Work}
%A key design principle of RECAST is to remain as non-parametric as possible. This is not a modeling preference, but a consequence of non-identifiability under one-sided CF access and limited query budgets: although CFs are designed at the individual level, their observable effect in reconstruction is to constrain how probability mass can be transported rather than where decision boundaries lie, which makes Wasserstein geometry fundamentally more appropriate than parametric or pointwise prototype constructions. Our work demonstrates that Wasserstein barycenters provide a robust framework for classifier reconstruction. RECAST captures nuanced relationships between data distributions, enabling the formation of prototypes that effectively represent both labeled data and CFs. 
We propose RECAST, a model reconstruction approach grounded in Wasserstein geometry that captures relationships between data distributions, enabling the formation of robust class prototypes that effectively represent both labeled data and CFs, under one-sided CF access and limited query budgets.
%This design is a consequence of non-identifiability:  CFs constrain how probability mass can be transported rather than where decision boundaries lie, making Wasserstein geometry more appropriate than parametric or pointwise prototype constructions.

In our experiments, we demonstrate that RECAST maintains high fidelity in small data regimes, where overfitting and poor generalization are prevalent concerns. Importantly, RECAST is not designed to compete with large-query model extraction: it is designed for reconstruction scenarios where data is scarce, access is restricted, and CFs are the only boundary-relevant signal.

%In addition, we investigate geometric relationships between sensitive groups, thereby enabling fairness audits consistent with the target model’s fairness behavior.
Our approach is currently limited to binary classification, which is common in literature since most applications consider accept-reject decision. 
Extending RECAST to multi-class settings, and leveraging prior knowledge constitute interesting directions for future work.

%\todo[inline, ,color=nice!30]{multiclass?}
%Future work should aim to explore alternative representations of the dataset, i.e., using prototype representations other than the barycenter.
%\paragraph{Limitations \& Future Work}
%RECAST is not designed to compete with large-query model extraction. It is designed for auditing scenarios, where data is scarce, access is restricted, and CFs are the only boundary-relevant signal.We acknowledge that computations involving the Wasserstein distance can be expensive. However, given our low-data regime, computational cost is not a significant concern in our setting. When ample data is available, more complex models such as neural networks may yield superior performance. Nevertheless, in privacy-sensitive applications, datasets are often small and query access is restricted, making our approach especially relevant and practical in such contexts.
%Our approach assumes no prior knowledge of the original model; however, incorporating such prior information could substantially enhance reconstruction quality. Advancing research in these directions will contribute to developing more data-efficient and interpretable model extraction methods for real-world scenarios.
\clearpage
\newpage
\section*{Acknowledgements}
This work was partially funded by project W2/W3-108 Initiative and Networking Fund of the Helmholtz Association. We gratefully acknowledge the computing time granted through project XAI (No. 65881) on the supercomputer JURECA at Jülich Supercomputing Centre (JSC).
\section*{Impact Statement}

This work studies classifier reconstruction under restricted data access and one-sided CF supervision, a setting that arises naturally when interacting with black-box or proprietary decision systems.
By characterizing what aspects of model behavior are fundamentally identifiable under such constraints, our framework enables high-fidelity reconstruction of decision behavior, supporting the auditing and investigation of deployed models for fairness, bias, and accountability concerns in settings with limited model access.
Importantly, our approach does not aim to recover a unique ground-truth classifier or individual training examples, but instead focuses on CF-identifiable behavioral structure derived from distributional and counterfactual information.

The primary impact of our work involves reducing the barriers to model reconstruction. Organizations and communities affected by ML decision-making systems, e.g., loan applications, hiring, or healthcare, often lack resources to audit the systems they rely on. Our approach offers these groups a practical tool to investigate issues, such as disparate impact, discrimination, or violations of fairness principles in deployed models, even when model owners are unwilling or unable to grant direct access.

Our work supports the regulatory landscape around responsible, transparent and accountable AI, as policymakers worldwide establish requirements for algorithmic impact assessments and accountability. 
%It enables third-party auditors to validate fairness claims and detect harmful patterns without relying solely on self-reporting by model owners.
 
While our method does introduce security considerations for information leakage, the societal benefit of enabling fairness investigations outweighs these concerns in contexts involving high-stakes decisions affecting vulnerable populations. 

Our work is likely to advance the field toward the establishment of more trustworthy and responsible AI systems by improving accountability.
\bibliography{bib}

@article{agueh2011barycenters,
author       = {Martial Agueh and
                  Guillaume Carlier},
  title        = {Barycenters in the Wasserstein Space},
  journal      = {{SIAM} J. Math. Anal.},
  volume       = {43},
  number       = {2},
  pages        = {904--924},
  year         = {2011},
  url          = {https://doi.org/10.1137/100805741},
  doi          = {10.1137/100805741},
  bibsource    = {dblp computer science bibliography, https://dblp.org}
}

@article{DBLP:journals/corr/abs-2009-01884,
  author       = {Ulrich A{\"{\i}}vodji and
                  Alexandre Bolot and
                  S{\'{e}}bastien Gambs},
  title        = {Model extraction from counterfactual explanations},
  journal      = {CoRR},
  volume       = {abs/2009.01884},
  year         = {2020}
}

@article{aivodji2021characterizing,
author       = {Ulrich A{\"{\i}}vodji and
                  Hiromi Arai and
                  S{\'{e}}bastien Gambs and
                  Satoshi Hara},
  editor       = {Marc'Aurelio Ranzato and
                  Alina Beygelzimer and
                  Yann N. Dauphin and
                  Percy Liang and
                  Jennifer Wortman Vaughan},
  title        = {Characterizing the risk of fairwashing},
  booktitle    = {Advances in Neural Information Processing Systems 34: Annual Conference
                  on Neural Information Processing Systems 2021, NeurIPS 2021, December
                  6-14, 2021, virtual},
  pages        = {14822--14834},
  year         = {2021},
  url          = {https://proceedings.neurips.cc/paper/2021/hash/7caf5e22ea3eb8175ab518429c8589a4-Abstract.html},
  bibsource    = {dblp computer science bibliography, https://dblp.org}
}

@article{blanchet2019robust,
author       = {Jose H. Blanchet and
                  Yang Kang and
                  Karthyek Rajhaa A. M.},
  title        = {Robust Wasserstein profile inference and applications to machine learning},
  journal      = {J. Appl. Probab.},
  volume       = {56},
  number       = {3},
  pages        = {830--857},
  year         = {2019},
  url          = {https://doi.org/10.1017/jpr.2019.49},
  doi          = {10.1017/JPR.2019.49},
  bibsource    = {dblp computer science bibliography, https://dblp.org}
}

@article{carlini2024stealing,
 author       = {Nicholas Carlini and
                  Daniel Paleka and
                  Krishnamurthy Dj Dvijotham and
                  Thomas Steinke and
                  Jonathan Hayase and
                  A. Feder Cooper and
                  Katherine Lee and
                  Matthew Jagielski and
                  Milad Nasr and
                  Arthur Conmy and
                  Eric Wallace and
                  David Rolnick and
                  Florian Tram{\`{e}}r},
  title        = {Stealing part of a production language model},
  booktitle    = {Forty-first International Conference on Machine Learning, {ICML} 2024,
                  Vienna, Austria, July 21-27, 2024},
  publisher    = {OpenReview.net},
  year         = {2024},
  url          = {https://openreview.net/forum?id=VE3yWXt3KB},
  bibsource    = {dblp computer science bibliography, https://dblp.org}
}

@inproceedings{DBLP:conf/uai/Chen020,
  author       = {Mingliang Chen and
                  Min Wu},
  title        = {Towards Threshold Invariant Fair Classification},
  booktitle    = {{UAI}},
  series       = {Proceedings of Machine Learning Research},
  volume       = {124},
  pages        = {560--569},
  publisher    = {{AUAI} Press},
  year         = {2020}
}

@inproceedings{chizat2020faster,
author       = {L{\'{e}}na{\"{\i}}c Chizat and
                  Pierre Roussillon and
                  Flavien L{\'{e}}ger and
                  Fran{\c{c}}ois{-}Xavier Vialard and
                  Gabriel Peyr{\'{e}}},
  editor       = {Hugo Larochelle and
                  Marc'Aurelio Ranzato and
                  Raia Hadsell and
                  Maria{-}Florina Balcan and
                  Hsuan{-}Tien Lin},
  title        = {Faster Wasserstein Distance Estimation with the Sinkhorn Divergence},
  booktitle    = {Advances in Neural Information Processing Systems 33: Annual Conference
                  on Neural Information Processing Systems 2020, NeurIPS 2020, December
                  6-12, 2020, virtual},
  year         = {2020},
  url          = {https://proceedings.neurips.cc/paper/2020/hash/17f98ddf040204eda0af36a108cbdea4-Abstract.html},
  bibsource    = {dblp computer science bibliography, https://dblp.org}
}

@article{deng2012mnist,
  author       = {Li Deng},
  title        = {The {MNIST} Database of Handwritten Digit Images for Machine Learning
                  Research [Best of the Web]},
  journal      = {{IEEE} Signal Process. Mag.},
  volume       = {29},
  number       = {6},
  pages        = {141--142},
  year         = {2012},
  url          = {https://doi.org/10.1109/MSP.2012.2211477},
  doi          = {10.1109/MSP.2012.2211477},
  bibsource    = {dblp computer science bibliography, https://dblp.org}
}

@inproceedings{DBLP:conf/nips/DissanayakeD24,
  author       = {Pasan Dissanayake and
                  Sanghamitra Dutta},
  editor       = {Amir Globersons and
                  Lester Mackey and
                  Danielle Belgrave and
                  Angela Fan and
                  Ulrich Paquet and
                  Jakub M. Tomczak and
                  Cheng Zhang},
  title        = {Model Reconstruction Using Counterfactual Explanations: {A} Perspective
                  From Polytope Theory},
  booktitle    = {Advances in Neural Information Processing Systems 38: Annual Conference
                  on Neural Information Processing Systems 2024, NeurIPS 2024, Vancouver,
                  BC, Canada, December 10 - 15, 2024},
  year         = {2024},
  url          = {http://papers.nips.cc/paper\_files/paper/2024/hash/97d539aa02b82c52d4a9eda62e1c6435-Abstract-Conference.html},
  bibsource    = {dblp computer science bibliography, https://dblp.org}
}

@inproceedings{ferrytrained,
  author       = {Julien Ferry and
                  Ricardo Fukasawa and
                  Timoth{\'{e}}e Pascal and
                  Thibaut Vidal},
  title        = {Trained Random Forests Completely Reveal your Dataset},
  booktitle    = {Forty-first International Conference on Machine Learning, {ICML} 2024,
                  Vienna, Austria, July 21-27, 2024},
  publisher    = {OpenReview.net},
  year         = {2024},
  url          = {https://openreview.net/forum?id=cc72Vnfvoc},
  bibsource    = {dblp computer science bibliography, https://dblp.org}
}

@inproceedings{feydy2019interpolating,
   author       = {Jean Feydy and
                  Thibault S{\'{e}}journ{\'{e}} and
                  Fran{\c{c}}ois{-}Xavier Vialard and
                  Shun{-}ichi Amari and
                  Alain Trouv{\'{e}} and
                  Gabriel Peyr{\'{e}}},
  editor       = {Kamalika Chaudhuri and
                  Masashi Sugiyama},
  title        = {Interpolating between Optimal Transport and {MMD} using Sinkhorn Divergences},
  booktitle    = {The 22nd International Conference on Artificial Intelligence and Statistics,
                  {AISTATS} 2019, 16-18 April 2019, Naha, Okinawa, Japan},
  series       = {Proceedings of Machine Learning Research},
  volume       = {89},
  pages        = {2681--2690},
  publisher    = {{PMLR}},
  year         = {2019},
  url          = {http://proceedings.mlr.press/v89/feydy19a.html},
  bibsource    = {dblp computer science bibliography, https://dblp.org}
}

@article{gao2023distributionally,
  author       = {Rui Gao and
                  Anton J. Kleywegt},
  title        = {Distributionally Robust Stochastic Optimization with Wasserstein Distance},
  journal      = {Math. Oper. Res.},
  volume       = {48},
  number       = {2},
  pages        = {603--655},
  year         = {2023},
  url          = {https://doi.org/10.1287/moor.2022.1275},
  doi          = {10.1287/MOOR.2022.1275},
  bibsource    = {dblp computer science bibliography, https://dblp.org}
}

@article{DBLP:journals/cm/GongWCYJ20,
  author       = {Xueluan Gong and
                  Qian Wang and
                  Yanjiao Chen and
                  Wang Yang and
                  Xinchang Jiang},
  title        = {Model Extraction Attacks and Defenses on Cloud-Based Machine Learning
                  Models},
  journal      = {{IEEE} Commun. Mag.},
  volume       = {58},
  number       = {12},
  pages        = {83--89},
  year         = {2020},
  url          = {https://doi.org/10.1109/MCOM.001.2000196},
  doi          = {10.1109/MCOM.001.2000196},
  bibsource    = {dblp computer science bibliography, https://dblp.org}
}

@article{khouna2025counterfactuals,
  author       = {Awa Khouna and
                  Julien Ferry and
                  Thibaut Vidal},
  title        = {From Counterfactuals to Trees: Competitive Analysis of Model Extraction
                  Attacks},
  journal      = {CoRR},
  volume       = {abs/2502.05325},
  year         = {2025},
  url          = {https://doi.org/10.48550/arXiv.2502.05325},
  doi          = {10.48550/ARXIV.2502.05325},
  eprinttype    = {arXiv},
  eprint       = {2502.05325},
  bibsource    = {dblp computer science bibliography, https://dblp.org}
}

@inproceedings{liang2024model,
  author       = {Jiacheng Liang and
                  Ren Pang and
                  Changjiang Li and
                  Ting Wang},
  editor       = {Jianying Zhou and
                  Tony Q. S. Quek and
                  Debin Gao and
                  Alvaro A. C{\'{a}}rdenas},
  title        = {Model Extraction Attacks Revisited},
  booktitle    = {Proceedings of the 19th {ACM} Asia Conference on Computer and Communications
                  Security, {ASIA} {CCS} 2024, Singapore, July 1-5, 2024},
  publisher    = {{ACM}},
  year         = {2024},
  url          = {https://doi.org/10.1145/3634737.3657002},
  doi          = {10.1145/3634737.3657002},
  bibsource    = {dblp computer science bibliography, https://dblp.org}
}

@article{mccann1997convexity,
  title={A convexity principle for interacting gases},
  author={McCann, Robert J},
  journal={Advances in mathematics},
  volume={128},
  number={1},
  pages={153--179},
  year={1997},
  publisher={Elsevier}
}

@article{mohajerin2018data,
  title={Data-driven distributionally robust optimization using the Wasserstein metric: Performance guarantees and tractable reformulations},
  author={Mohajerin Esfahani, Peyman and Kuhn, Daniel},
  journal={Mathematical Programming},
  volume={171},
  number={1},
  pages={115--166},
  year={2018},
  publisher={Springer}
}

@inproceedings{DBLP:conf/fat/MothilalST20,
  author       = {Ramaravind Kommiya Mothilal and
                  Amit Sharma and
                  Chenhao Tan},
  title        = {Explaining machine learning classifiers through diverse counterfactual
                  explanations},
  booktitle    = {FAccT},
  pages        = {607--617},
  publisher    = {{ACM}},
  year         = {2020}
}

@inproceedings{DBLP:conf/www/PawelczykBK20,
  author       = {Martin Pawelczyk and
                  Klaus Broelemann and
                  Gjergji Kasneci},
  title        = {Learning Model-Agnostic Counterfactual Explanations for Tabular Data},
  booktitle    = {{WWW}},
  pages        = {3126--3132},
  publisher    = {{ACM} / {IW3C2}},
  year         = {2020}
}

@inproceedings{DBLP:conf/aies/ShokriSZ21,
  author       = {Reza Shokri and
                  Martin Strobel and
                  Yair Zick},
  title        = {On the Privacy Risks of Model Explanations},
  booktitle    = {{AIES}},
  pages        = {231--241},
  publisher    = {{ACM}},
  year         = {2021}
}

@inproceedings{DBLP:conf/uss/TramerZJRR16,
  author       = {Florian Tram{\`{e}}r and
                  Fan Zhang and
                  Ari Juels and
                  Michael K. Reiter and
                  Thomas Ristenpart},
  title        = {Stealing Machine Learning Models via Prediction APIs},
  booktitle    = {{USENIX} Security Symposium},
  pages        = {601--618},
  publisher    = {{USENIX} Association},
  year         = {2016}
}

@inproceedings{DBLP:conf/nips/UpadhyayJL21,
  author       = {Sohini Upadhyay and
                  Shalmali Joshi and
                  Himabindu Lakkaraju},
  title        = {Towards Robust and Reliable Algorithmic Recourse},
  booktitle    = {NeurIPS},
  pages        = {16926--16937},
  year         = {2021}
}

@article{DBLP:journals/corr/abs-1711-00399,
  author       = {Sandra Wachter and
                  Brent D. Mittelstadt and
                  Chris Russell},
  title        = {Counterfactual Explanations without Opening the Black Box: Automated
                  Decisions and the {GDPR}},
  journal      = {CoRR},
  volume       = {abs/1711.00399},
  year         = {2017}
}

@inproceedings{DBLP:conf/fat/WangQM22,
  author       = {Yongjie Wang and
                  Hangwei Qian and
                  Chunyan Miao},
  title        = {DualCF: Efficient Model Extraction Attack from Counterfactual Explanations},
  booktitle    = {FAccT},
  pages        = {1318--1329},
  publisher    = {{ACM}},
  year         = {2022}
}

@inproceedings{DBLP:conf/aistats/YouCNZL25,
  author       = {Lei You and
                  Lele Cao and
                  Mattias Nilsson and
                  Bo Zhao and
                  Lei Lei},
  title        = {Distributional Counterfactual Explanations With Optimal Transport},
  booktitle    = {{AISTATS}},
  series       = {Proceedings of Machine Learning Research},
  volume       = {258},
  pages        = {1135--1143},
  publisher    = {{PMLR}},
  year         = {2025}
}

@inproceedings{zhao2025survey,
  title={A survey on model extraction attacks and defenses for large language models},
  author={Zhao, Kaixiang and Li, Lincan and Ding, Kaize and Gong, Neil Zhenqiang and Zhao, Yue and Dong, Yushun},
  booktitle={Proceedings of the 31st ACM SIGKDD Conference on Knowledge Discovery and Data Mining V. 2},
  pages={6227--6236},
  year={2025}




}

@inproceedings{pal2020activethief,
  title={Activethief: Model extraction using active learning and unannotated public data},
  author={Pal, Soham and Gupta, Yash and Shukla, Aditya and Kanade, Aditya and Shevade, Shirish and Ganapathy, Vinod},
  booktitle={Proceedings of the AAAI conference on artificial intelligence},
  volume={34},
  pages={865--872},
  year={2020}
}

@inproceedings{jagielski2020high,
  title={High accuracy and high fidelity extraction of neural networks},
  author={Jagielski, Matthew and Carlini, Nicholas and Berthelot, David and Kurakin, Alex and Papernot, Nicolas},
  booktitle={29th USENIX security symposium (USENIX Security 20)},
  pages={1345--1362},
  year={2020}
}

@misc{fico2018,
  author       = {FICO},
  title        = {Explainable Machine Learning Challenge Dataset},
  year         = {2018},
  url          = {https://community.fico.com/s/explainable-machine-learning-challenge}
}

@inproceedings{dwork2012fairness,
  title={Fairness through awareness},
  author={Dwork, Cynthia and Hardt, Moritz and Pitassi, Toniann and Reingold, Omer and Zemel, Richard},
  booktitle={Proceedings of the 3rd innovations in theoretical computer science conference},
  pages={214--226},
  year={2012}}

@inproceedings{feldman2015certifying,
  title={Certifying and removing disparate impact},
  author={Feldman, Michael and Friedler, Sorelle A and Moeller, John and Scheidegger, Carlos and Venkatasubramanian, Suresh},
  booktitle={proceedings of the 21th ACM SIGKDD international conference on knowledge discovery and data mining},
  pages={259--268},
  year={2015}
}

@article{hardt2016equality,
  title={Equality of opportunity in supervised learning},
  author={Hardt, Moritz and Price, Eric and Srebro, Nati},
  journal={Advances in neural information processing systems},
  volume={29},
  year={2016}
}

@misc{propublica2016compas,
  author       = {Julia Angwin and Jeff Larson and Surya Mattu and Lauren Kirchner},
  title        = {ProPublica COMPAS Recidivism Risk Score Data and Analysis},
  year         = {2016},
  howpublished = {\url{https://www.propublica.org/article/machine-bias-risk-assessments-in-criminal-sentencing}}
}

@misc{adult,
  title        = {Adult Data Set},
  author       = {Barry Becker},
  year         = {1996},
  howpublished = {\url{https://archive.ics.uci.edu/dataset/2/adult}}
}

@misc{californiahousing,
  author       = {Pace, R. Kelley and Barry, Ronald},
  title        = {Sparse Spatial Autoregressions},
  year         = {1997},
  journal      = {Statistics and Probability Letters},
  volume       = {33},
  pages        = {291--297},
  note         = {Data retrieved from the StatLib repository},
  url          = {https://www.dcc.fc.up.pt/~ltorgo/Regression/cal_housing.html}
}

@inproceedings{DBLP:conf/cvpr/OrekondySF19,
  author       = {Tribhuvanesh Orekondy and
                  Bernt Schiele and
                  Mario Fritz},
  title        = {Knockoff Nets: Stealing Functionality of Black-Box Models},
  booktitle    = {{CVPR}},
  pages        = {4954--4963},
  publisher    = {Computer Vision Foundation / {IEEE}},
  year         = {2019}
}

@inproceedings{DBLP:conf/icml/Blanchet0LL24,
  author       = {Jos{\'{e}} H. Blanchet and
                  Peng Cui and
                  Jiajin Li and
                  Jiashuo Liu},
  title        = {Stability Evaluation through Distributional Perturbation Analysis},
  booktitle    = {{ICML}},
  publisher    = {OpenReview.net},
  year         = {2024}
}

@misc{jigsaw2018toxic,
  title = {Jigsaw Toxic Comment Classification Challenge},
  author = {{Jigsaw}},
  year = {2018},
  howpublished = {\url{https://www.kaggle.com/c/jigsaw-toxic-comment-classification-challenge}}
}

@article{ding2021retiring,
  title={Retiring Adult: New Datasets for Fair Machine Learning},
  author={Ding, Frances and Hardt, Moritz and Miller, John and Schmidt, Ludwig},
  journal={Advances in Neural Information Processing Systems},
  volume={34},
  year={2021}
}

@article{heckman1979sample,
  title={Sample selection bias as a specification error},
  author={Heckman, James J},
  journal={Econometrica: Journal of the econometric society},
  pages={153--161},
  year={1979},
  publisher={JSTOR}
}

@article{shimodaira2000improving,
  title={Improving predictive inference under covariate shift by weighting the log-likelihood function},
  author={Shimodaira, Hidetoshi},
  journal={Journal of statistical planning and inference},
  volume={90},
  number={2},
  pages={227--244},
  year={2000},
  publisher={Elsevier}
}

@inproceedings{zadrozny2004learning,
  title={Learning and evaluating classifiers under sample selection bias},
  author={Zadrozny, Bianca},
  booktitle={Proceedings of the twenty-first international conference on Machine learning},
  pages={114},
  year={2004}
}
\bibliographystyle{icml2026}

%%%%%%%%%%%%%%%%%%%%%%%%%%%%%%%%%%%%%%%%%%%%%%%%%%%%%%%%%%%%%%%%%%%%%%%%%%%%%%%
%%%%%%%%%%%%%%%%%%%%%%%%%%%%%%%%%%%%%%%%%%%%%%%%%%%%%%%%%%%%%%%%%%%%%%%%%%%%%%%
% APPENDIX
%%%%%%%%%%%%%%%%%%%%%%%%%%%%%%%%%%%%%%%%%%%%%%%%%%%%%%%%%%%%%%%%%%%%%%%%%%%%%%%
%%%%%%%%%%%%%%%%%%%%%%%%%%%%%%%%%%%%%%%%%%%%%%%%%%%%%%%%%%%%%%%%%%%%%%%%%%%%%%%
\newpage
\appendix
\onecolumn

%\section{Pseudocode}

%\begin{algorithm}[tb]
%\caption{Optimization of the Barycenters} \label{alg:optimization_barycenters}
%\textbf{Input}: Datasets $\mathcal{D}_0$, $\mathcal{D}_1$, $\mathcal{D}_{\text{cf}} \in \mathbb{R}^{k \times d}$;\\
%\textbf{Parameter}: number of iterations $T_{\max}$; regularization parameter $\beta$; learning rate $\eta$;\\
%\textbf{Output}: Optimized barycenters $\mathbb{Q}_0, \mathbb{Q}_1$;\\
%\begin{algorithmic}[1]
%\FOR{each class $c \in \{0, 1\}$}
%    \STATE Compute similarity weight $\lambda_c$ from datasets $\mathcal{D}_0$, $\mathcal{D}_1$, and $\mathcal{D}_{\text{cf}}$ according to Eq. (4)
%\ENDFOR
%\STATE Initialize barycenter tensors $\mathbb{Q}_0, \mathbb{Q}_1 \in \mathbb{R}^{k \times d}$ with gradients enabled
%\FOR{$t = 1$ to $T_{\max}$}
%    \STATE Compute Wasserstein/Sinkhorn distances using \texttt{SamplesLoss("sinkhorn", p=2)} \cite{feydy2019interpolating}  on $\mathcal{D}_0$, $\mathcal{D}_1$, $\mathcal{D}_{\text{cf}}$ and barycenters $\mathbb{Q}_0$, $\mathbb{Q}_1$
%    \STATE Compute loss $\mathcal{L}$ as defined in Eq. (7)
%    \STATE Backpropagate and update $\mathbb{Q}_0$, $\mathbb{Q}_1$ using Adam optimizer with learning rate $\eta$
%\ENDFOR
%\STATE \textbf{return} $\mathbb{Q}_0, \mathbb{Q}_1$
%\end{algorithmic}
%\end{algorithm}
\section{Wasserstein Distance and Barycenters}
\label{appendix:wasserstein}

\subsection{Wasserstein Distance}
\label{wasserstein_def}

We briefly recall the notions from optimal transport that are required for the formulation and analysis of our reconstruction framework, in particular Definition~\ref{def:feasible} and Theorem~\ref{thm:minimax}. Let $\mathcal{P}_2(\mathbb{R}^d)$ denote the set of probability measures on $\mathbb{R}^d$ with finite second moments.

\paragraph{2-Wasserstein distance.}
For $\mu, \nu \in \mathcal{P}_2(\mathbb{R}^d)$, the squared $2$-Wasserstein distance is defined as
\begin{equation}
W_2^2(\mu, \nu)
:= \inf_{\gamma \in \Pi(\mu, \nu)}
\int_{\mathbb{R}^d \times \mathbb{R}^d}
\|x_1 - x_2\|_2^2 \, d\gamma(x_1, x_2),
\label{eq:w2}
\end{equation}
where $\Pi(\mu, \nu)$ denotes the set of couplings (transport plans) with marginals $\mu$ and $\nu$.

The Wasserstein distance induces a geometry on probability measures that accounts for both mass displacement and spatial structure, and is therefore well suited for comparing distributions arising from localized perturbations such as counterfactual examples.

\paragraph{Monge formulation.}
When there exists a measurable transport map $T:\mathbb{R}^d \to \mathbb{R}^d$ such that $T_{\#}\mu = \nu$, the optimal transport problem admits the equivalent Monge formulation
\begin{equation}
T
= \arg\min_{T_{\#}\mu = \nu}
\int_{\mathbb{R}^d} \|x - T(x)\|_2^2 \, d\mu(x).
\end{equation}
In general, such a map need not exist; however, the Kantorovich formulation in~\eqref{eq:w2} always remains well defined and is the form used throughout this work.

\subsection{Wasserstein Barycenter}
\label{def_barycenter}

Given a finite collection of probability measures $\{\mu_i\}_{i=1}^N \subset \mathcal{P}_2(\mathbb{R}^d)$ and nonnegative weights $\{\lambda_i\}_{i=1}^N$ satisfying $\sum_{i=1}^N \lambda_i = 1$, the Wasserstein barycenter is defined as
\begin{equation}
\mu^*
= \arg\min_{\mu \in \mathcal{P}_2(\mathbb{R}^d)}
\sum_{i=1}^N \lambda_i W_2^2(\mu, \mu_i).
\label{eq:barycenter}
\end{equation}

The barycenter $\mu^*$ provides a distributional prototype that summarizes the input measures in Wasserstein space. Unlike Euclidean averaging, this formulation aligns probability mass prior to aggregation, enabling the resulting prototype to reflect decision-relevant geometric structure induced by the target classifier.

In our setting, the input measures correspond to distributions derived from counterfactual perturbations. These perturbations encode geometric information about the decision behavior of the black-box model, allowing the barycenter to serve as a behaviorally meaningful class-level representation without requiring explicit recovery of the decision boundary.

\subsubsection{Relevant properties}

We summarize only the properties required for the theoretical results in Section~\ref{sec:method}.

\begin{itemize}
    \item \textbf{Existence.}
    If each $\mu_i \in \mathcal{P}_2(\mathbb{R}^d)$, then a Wasserstein barycenter $\mu^*$ exists.

    \item \textbf{Convexity of the objective.}
    The barycenter objective in Eq.~\eqref{eq:barycenter} is convex in $\mu$, ensuring that all minimizers attain the same optimal value.

    \item \textbf{Non-uniqueness.}
    The barycenter may not be unique; however, as shown in Theorem~\ref{thm:minimax}, this non-identifiability does not affect the induced decision rule, which depends only on Wasserstein distance values rather than on a particular minimizer.
\end{itemize}

\subsection{Robust Risk Bounds and Barycentric Optima}\label{proof_theorem}
We begin by proving Theorem~\ref{thm:minimax}, which establishes that the minimizers of
a CF-consistent robust risk upper bound admit a Wasserstein barycentric
form. The subsequent results in this subsection further clarify the
geometric interpretation and stability properties of these barycentric
optima.

\paragraph{Proof of Theorem~\ref{thm:minimax}.}
\begin{proof}
Fix a class $c\in\{0,1\}$. We derive a tractable upper bound on the
CF-consistent worst-case risk
\[
\mathcal R_c(\mathbb Q_c)
:= \sup_{\mu\in\mathcal C_c} W_2^2(\mu,\mathbb Q_c),
\qquad
\mathcal C_c=\{\mu:\;
W_2(\mu,\mathbb P_c)\le\varepsilon_c,\;
W_2(\mu,\mathbb P_{cf})\le\delta_c\}.
\]

For any $\mu\in\mathcal C_c$, the triangle inequality yields
\[
W_2(\mu,\mathbb Q_c)
\le W_2(\mathbb Q_c,\mathbb P_c)+\varepsilon_c,
\qquad
W_2(\mu,\mathbb Q_c)
\le W_2(\mathbb Q_c,\mathbb P_{cf})+\delta_c.
\]
Consequently,
\[
W_2^2(\mu,\mathbb Q_c)
\le
\min\Bigl\{
\big(W_2(\mathbb Q_c,\mathbb P_c)+\varepsilon_c\big)^2,\;
\big(W_2(\mathbb Q_c,\mathbb P_{cf})+\delta_c\big)^2
\Bigr\}.
\]

Since for any $\alpha_c\in[0,1]$ and any $a,b\ge 0$,
$\min\{a,b\}\le (1-\alpha_c)a+\alpha_c b$, we obtain
\[
W_2^2(\mu,\mathbb Q_c)
\le
(1-\alpha_c)\big(W_2(\mathbb Q_c,\mathbb P_c)+\varepsilon_c\big)^2
+\alpha_c\big(W_2(\mathbb Q_c,\mathbb P_{cf})+\delta_c\big)^2.
\]

To control the cross terms, we apply Young's inequality: for any
$\eta_1,\eta_2>0$ and any $a\ge 0$,
\[
2\varepsilon_c a \le \eta_1 a^2 + \frac{\varepsilon_c^2}{\eta_1},
\qquad
2\delta_c a \le \eta_2 a^2 + \frac{\delta_c^2}{\eta_2}.
\]
Applying this inequality to the two squared terms yields
\begin{align*}
\big(W_2(\mathbb Q_c,\mathbb P_c)+\varepsilon_c\big)^2
&\le (1+\eta_1)W_2^2(\mathbb Q_c,\mathbb P_c)
+ \Bigl(1+\frac{1}{\eta_1}\Bigr)\varepsilon_c^2,\\
\big(W_2(\mathbb Q_c,\mathbb P_{cf})+\delta_c\big)^2
&\le (1+\eta_2)W_2^2(\mathbb Q_c,\mathbb P_{cf})
+ \Bigl(1+\frac{1}{\eta_2}\Bigr)\delta_c^2.
\end{align*}

Combining the above bounds, we obtain for all $\mu\in\mathcal C_c$,
\[
W_2^2(\mu,\mathbb Q_c)
\le
(1-\alpha_c)(1+\eta_1)W_2^2(\mathbb Q_c,\mathbb P_c)
+\alpha_c(1+\eta_2)W_2^2(\mathbb Q_c,\mathbb P_{cf})
+K_c,
\]
where
\[
K_c
=(1-\alpha_c)\Bigl(1+\frac{1}{\eta_1}\Bigr)\varepsilon_c^2
+\alpha_c\Bigl(1+\frac{1}{\eta_2}\Bigr)\delta_c^2
\]
is independent of $\mathbb Q_c$.

Taking the supremum over $\mu\in\mathcal C_c$ yields the robust upper bound
\[
\mathcal R_c(\mathbb Q_c)
\le
(1-\alpha_c)(1+\eta_1)W_2^2(\mathbb Q_c,\mathbb P_c)
+\alpha_c(1+\eta_2)W_2^2(\mathbb Q_c,\mathbb P_{cf})
+K_c.
\]

Since $K_c$ does not depend on $\mathbb Q_c$, minimizing this bound is
equivalent to minimizing its quadratic part. Defining
\[
\lambda_c
=
\frac{\alpha_c(1+\eta_2)}
{(1-\alpha_c)(1+\eta_1)+\alpha_c(1+\eta_2)} \in [0,1],
\]
we obtain the barycentric objective
\[
(1-\lambda_c)W_2^2(\mathbb Q_c,\mathbb P_c)
+\lambda_c W_2^2(\mathbb Q_c,\mathbb P_{cf}),
\]
whose minimizer is the 2-Wasserstein barycenter of
$\mathbb P_c$ and $\mathbb P_{cf}$ with weights
$(1-\lambda_c,\lambda_c)$.
\end{proof}

\paragraph{Existence and interpretation of barycentric optima.}
Theorem~\ref{thm:minimax} shows that minimizing a suitable robust upper bound on the
CF-consistent worst-case risk leads to a Wasserstein barycentric
prototype. We formalize this consequence below and clarify the role of
the associated bounding parameters.

\begin{lemma}[Barycentric optima induced by robust bounds]
\label{lem:barycentric_upper_bound}
For each class $c\in\{0,1\}$, consider the CF-consistent worst-case risk
\[
\mathcal R_c(\mathbb Q_c)
:= \sup_{\mu\in\mathcal C_c} W_2^2(\mu,\mathbb Q_c),
\qquad
\mathcal C_c=\{\mu:\;
W_2(\mu,\mathbb P_c)\le\varepsilon_c,\;
W_2(\mu,\mathbb P_{cf})\le\delta_c\}.
\]
There exist parameters $\alpha_c\in[0,1]$ and $\eta_1,\eta_2>0$ such that
the minimizer of a valid robust upper bound on $\mathcal R_c(\mathbb Q_c)$
is given by the $2$-Wasserstein barycenter between
$\mathbb P_c$ and $\mathbb P_{cf}$ with weight
\[
\lambda_c
=
\frac{\alpha_c(1+\eta_2)}
{(1-\alpha_c)(1+\eta_1)+\alpha_c(1+\eta_2)}.
\]
Equivalently, for this choice of parameters,
\[
\arg\min_{\mathbb Q_c}
\Bigl\{
(1-\lambda_c)W_2^2(\mathbb Q_c,\mathbb P_c)
+\lambda_c W_2^2(\mathbb Q_c,\mathbb P_{cf})
\Bigr\}
\]
minimizes a robust upper bound on the CF-consistent worst-case risk
$\mathcal R_c(\mathbb Q_c)$.
\end{lemma}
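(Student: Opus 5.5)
The plan is to reuse, essentially verbatim, the chain of inequalities from the proof of Theorem~\ref{thm:minimax}, and to keep track of which free parameters it introduces. Fix $c$ and an arbitrary $\mathbb Q_c\in\mathcal P_2(\mathcal X)$. For every $\mu\in\mathcal C_c$, the triangle inequality for $W_2$ gives the two bounds
\[
W_2(\mu,\mathbb Q_c)\le W_2(\mathbb Q_c,\mathbb P_c)+\varepsilon_c,
\qquad
W_2(\mu,\mathbb Q_c)\le W_2(\mathbb Q_c,\mathbb P_{cf})+\delta_c .
\]
Taking the smaller of the two squared bounds and using $\min\{a,b\}\le(1-\alpha_c)a+\alpha_c b$ for an arbitrary $\alpha_c\in[0,1]$ produces a convex combination.

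Next I expand each square with Young's inequality, $(a+r)^2\le(1+\eta)a^2+(1+1/\eta)r^2$ with $\eta=\eta_1$ or $\eta_2$. This yields a bound of the form
\[
B_c(\mathbb Q_c):=(1-\alpha_c)(1+\eta_1)W_2^2(\mathbb Q_c,\mathbb P_c)+\alpha_c(1+\eta_2)W_2^2(\mathbb Q_c,\mathbb P_{cf})+K_c ,
\]
where $K_c$ does not depend on $\mathbb Q_c$. The right-hand side does not depend on $\mu$, so taking the supremum over $\mathcal C_c$ shows $\mathcal R_c\le B_c$; hence $B_c$ is a valid robust upper bound for every admissible choice of parameters. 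One point to handle: if $\mathcal C_c$ were empty the supremum is $-\infty$ and the bound holds trivially, so I only need $\mathcal C_c\neq\emptyset$ for the bound to be meaningful, not for it to be valid.

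Because $K_c$ is constant, $\arg\min B_c$ coincides with the argmin of the quadratic part. Let $S=(1-\alpha_c)(1+\eta_1)+\alpha_c(1+\eta_2)>0$. Dividing by $S$ does not change the minimizers, and it turns the quadratic part into $(1-\lambda_c)W_2^2(\cdot,\mathbb P_c)+\lambda_c W_2^2(\cdot,\mathbb P_{cf})$ with $\lambda_c$ exactly as stated. This $\lambda_c$ lies in $[0,1]$, since numerator and denominator are nonnegative and the numerator is at most the denominator. By the definition in App.~\ref{def_barycenter}, the minimizers are precisely the 2-Wasserstein barycenters with weights $(1-\lambda_c,\lambda_c)$. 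Existence follows from the existence property listed there, since $\mathbb P_c,\mathbb P_{cf}\in\mathcal P_2$ (they are empirical measures).

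The existential quantifier is immediate: any choice such as $\alpha_c=1/2$ and $\eta_1=\eta_2=1$ works. Moreover, the map $(\alpha_c,\eta_1,\eta_2)\mapsto\lambda_c$ is onto $[0,1]$ (for instance via $\alpha_c$ with $\eta_1=\eta_2$ it gives $\lambda_c=\alpha_c$), so every weight is realized. The only step needing care is the phrase ``the minimizer''. I will read it as ``every minimizer'' (equivalently, the argmin set), using convexity of the objective and the non-uniqueness remark, rather than claiming uniqueness. Apart from that, I expect no real obstacle.
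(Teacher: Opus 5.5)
Your proposal is correct and follows the paper's proof: it reuses the triangle-inequality, convex-combination and Young's-inequality chain from Theorem~\ref{thm:minimax}, drops the constant $K_c$, and divides by the positive total weight to obtain $\lambda_c$. The points you add (positivity of the normalizer, $\lambda_c\in[0,1]$, existence of the barycenter, and reading ``the minimizer'' as the argmin set) make explicit what the paper leaves implicit.
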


\begin{proof}
This result follows directly from the proof of Theorem~\ref{thm:minimax}.
For any $\alpha_c\in[0,1]$ and $\eta_1,\eta_2>0$, the derivation above
yields a robust upper bound of the form
\[
\mathcal R_c(\mathbb Q_c)
\le
(1-\alpha_c)(1+\eta_1)W_2^2(\mathbb Q_c,\mathbb P_c)
+\alpha_c(1+\eta_2)W_2^2(\mathbb Q_c,\mathbb P_{cf})
+K_c,
\]
where $K_c$ is independent of $\mathbb Q_c$.

Since multiplication of the objective by a positive constant does not
affect its minimizers, the quadratic part of the bound can be
renormalized to yield the barycentric objective with weight
\[
\lambda_c
=
\frac{\alpha_c(1+\eta_2)}
{(1-\alpha_c)(1+\eta_1)+\alpha_c(1+\eta_2)}.
\]
\end{proof}

\begin{remark}[Barycentric lens of CF-consistent optima]
\label{rem:lens}
For any fixed $\eta_1,\eta_2>0$, the mapping
\[
\alpha_c \;\longmapsto\;
\lambda_c
=
\frac{\alpha_c(1+\eta_2)}
{(1-\alpha_c)(1+\eta_1)+\alpha_c(1+\eta_2)}
\]
is continuous and strictly increasing from $[0,1]$ to $[0,1]$.
Consequently, by varying $\alpha_c$, the induced barycentric weight
$\lambda_c$ ranges over the entire unit interval.
Thus, every Wasserstein barycenter along the geodesic between
$\mathbb P_c$ and $\mathbb P_{cf}$ is the minimizer of some CF-consistent
robust upper bound on $\mathcal R_c$.

Geometrically, the family of robust optima fills the Wasserstein
``lens'' induced by the intersection
$B_{W_2}(\mathbb P_c,\varepsilon_c)\cap B_{W_2}(\mathbb P_{cf},\delta_c)$,
justifying the barycentric prototype interpretation used in
Figure~\ref{fig:cf-lens}.
\end{remark}
\begin{remark}[Why no explicit bound tuning is required]
Different choices of the bounding parameters $(\alpha_c,\eta_1,\eta_2)$
lead to different robust upper bounds but induce barycentric optima
that lie on the same Wasserstein geodesic between $\mathbb P_c$
and $\mathbb P_{cf}$.
In practice, we therefore bypass explicit bound tuning and instead
select the barycentric weight $\lambda_c$ directly from the empirical
geometry, yielding a CF-consistent representative of the
robust equivalence class.
\end{remark}

\paragraph{Canonical barycentric classifier.}
Given the surrogate pair $(\mathbb Q_0^\ast,\mathbb Q_1^\ast)$, we define the
canonical barycentric classifier
\[
\hat m(x) \;=\; \arg\min_{c \in \{0,1\}} W_2(\delta_x,\mathbb Q_c^\ast).
\]

\begin{corollary}[Stability of the induced decision score]
\label{agreement}
Let $\mathbb Q=(\mathbb Q_0,\mathbb Q_1)$ and 
$\mathbb Q'=(\mathbb Q_0',\mathbb Q_1')$ be two pairs of class-conditional prototypes.
Define the decision score
\[
\Delta_{\mathbb Q}(x)
:= W_2(\delta_x,\mathbb Q_1) - W_2(\delta_x,\mathbb Q_0),
\]
and the induced classifier
\[
\hat m_{\mathbb Q}(x)
:= \mathbb I\!\left[\Delta_{\mathbb Q}(x) \le 0\right].
\]

Then, for all $x\in\mathcal X$,
\[
\big|\Delta_{\mathbb Q}(x)-\Delta_{\mathbb Q'}(x)\big|
\le
W_2(\mathbb Q_0,\mathbb Q_0')
+
W_2(\mathbb Q_1,\mathbb Q_1').
\]

Consequently, for any $L$-Lipschitz loss $\phi:\mathbb R\to[0,1]$,
\[
\big|
\mathbb E_{x\sim\mathbb P_\mathcal X}\!\left[\phi(\Delta_{\mathbb Q}(x))\right]
-
\mathbb E_{x\sim\mathbb P_\mathcal X}\!\left[\phi(\Delta_{\mathbb Q'}(x))\right]
\big|
\le
L\!\left(
W_2(\mathbb Q_0,\mathbb Q_0')
+
W_2(\mathbb Q_1,\mathbb Q_1')
\right).
\]
\end{corollary}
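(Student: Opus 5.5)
The plan is to exploit the fact that $x\mapsto W_2(\delta_x,\mathbb Q)$ is a $1$-Lipschitz function of the measure $\mathbb Q$ in the $W_2$ metric. This is just the triangle inequality for $W_2$ on $\mathcal P_2(\mathcal X)$, since $\delta_x\in\mathcal P_2(\mathcal X)$.

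\emph{Step 1.} For each fixed $x$ and each class $c$, apply the triangle inequality twice:
\[
W_2(\delta_x,\mathbb Q_c)\le W_2(\delta_x,\mathbb Q_c')+W_2(\mathbb Q_c',\mathbb Q_c),
\]
together with the symmetric inequality. This gives
\[
\bigl|W_2(\delta_x,\mathbb Q_c)-W_2(\delta_x,\mathbb Q_c')\bigr|\le W_2(\mathbb Q_c,\mathbb Q_c').
\]
(One could also note that $W_2^2(\delta_x,\mathbb Q)=\int\|x-y\|^2\,d\mathbb Q(y)$, because the only coupling is the product. That explicit form is not needed here.)

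\emph{Step 2.} Write the difference of scores as
\[
\Delta_{\mathbb Q}(x)-\Delta_{\mathbb Q'}(x)=\bigl[W_2(\delta_x,\mathbb Q_1)-W_2(\delta_x,\mathbb Q_1')\bigr]-\bigl[W_2(\delta_x,\mathbb Q_0)-W_2(\delta_x,\mathbb Q_0')\bigr].
\]
Bound each bracket by Step 1 and combine them with the scalar triangle inequality. This yields the pointwise bound, uniformly in $x$.

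\emph{Step 3.} For the expectation bound, use that $\phi$ is $L$-Lipschitz, so
\[
|\phi(\Delta_{\mathbb Q}(x))-\phi(\Delta_{\mathbb Q'}(x))|\le L\,|\Delta_{\mathbb Q}(x)-\Delta_{\mathbb Q'}(x)|.
\]
Insert the uniform bound from Step 2. Then pull the absolute value inside the expectation via $|\mathbb E f-\mathbb E g|\le \mathbb E|f-g|$. Integrability is automatic because $\phi$ takes values in $[0,1]$. The right-hand side is constant in $x$, so integrating against $\mathbb P_{\mathcal X}$ leaves it unchanged.

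The only point needing care is measurability of $x\mapsto\phi(\Delta_{\mathbb Q}(x))$. Step 1 applied with fixed $\mathbb Q_c$ and varying Dirac masses handles this: $W_2(\delta_x,\delta_{x'})=\|x-x'\|$, so $x\mapsto W_2(\delta_x,\mathbb Q_c)$ is $1$-Lipschitz in $x$. Hence $\Delta_{\mathbb Q}$ is continuous and the composition with $\phi$ is Borel. Apart from this, the argument is routine and I expect no real obstacle.
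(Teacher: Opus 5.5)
Your proposal is correct and follows the paper's proof essentially step for step: the triangle inequality gives $\bigl|W_2(\delta_x,\mathbb Q_c)-W_2(\delta_x,\mathbb Q_c')\bigr|\le W_2(\mathbb Q_c,\mathbb Q_c')$, subtracting the two class scores gives the pointwise bound, and Lipschitz continuity of $\phi$ together with $|\mathbb E f-\mathbb E g|\le\mathbb E|f-g|$ (the paper's appeal to Jensen's inequality) gives the expectation bound. Your measurability remark, that $x\mapsto W_2(\delta_x,\mathbb Q_c)$ is $1$-Lipschitz because $W_2(\delta_x,\delta_{x'})=\|x-x'\|$, is a small rigor point the paper leaves implicit.
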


\paragraph{Proof.}
By the triangle inequality of the Wasserstein distance,
\[
\big|W_2(\delta_x,\mathbb Q_c)-W_2(\delta_x,\mathbb Q_c')\big|
\le W_2(\mathbb Q_c,\mathbb Q_c')
\quad \text{for } c\in\{0,1\}.
\]
Subtracting the two class scores yields the first inequality.
The second inequality follows directly from the Lipschitz continuity of $\phi$
and Jensen's inequality.
\hfill$\square$

\paragraph{Remark.}
If there exists a reference distribution $\mu_c\in\mathcal C_c$ such that
$\mathcal R_c(\mathbb Q_c)\le\varepsilon_c$, then
\[
W_2(\mathbb Q_c,\mu_c)\le \sqrt{\varepsilon_c}.
\]
Hence, reconstruction error measured by $\mathcal E(\mathbb Q_0,\mathbb Q_1)
=\mathcal R_0(\mathbb Q_0)+\mathcal R_1(\mathbb Q_1)$
controls the stability of the induced decision score up to $O(\sqrt{\varepsilon})$.
This provides a geometric interpretation of why minimizing the barycentric objective
promotes behavioral stability under CF-consistent uncertainty.

\iffalse
\begin{corollary}\label{agreement}
Assume a margin condition: there exists $\gamma > 0$ such that for all
$\mu \in \mathcal{U}$,
\[
\Pr\!\left(
\bigl| W_2(\delta_x,\mathbb Q_0^\ast) - W_2(\delta_x,\mathbb Q_1^\ast) \bigr| \le \gamma
\right) = 0.
\]
Then for any CF-consistent distribution $\mu \in \mathcal{U}$,
the $0$--$1$ disagreement between the RECAST classifier $\hat m$ and the target classifier $m$
satisfies
\[
\Pr\!\bigl( \hat m(x) \neq m(x) \bigr)
\;\le\;
\frac{1}{\gamma}
\Bigl(
\mathcal{R}_0(\mathbb Q_0^\ast) + \mathcal{R}_1(\mathbb Q_1^\ast)
\Bigr).
\]
\end{corollary}

\begin{proof}
Under the margin assumption, a classification error can only occur when the difference
$W_2(\delta_x,\mathbb Q_0^\ast) - W_2(\delta_x,\mathbb Q_1^\ast)$ is perturbed by at least $\gamma$.
By Markov's inequality applied to the Wasserstein surrogate risks, the probability of
such events is bounded by the expected squared Wasserstein distances to the prototypes,
which are controlled by $\mathcal{R}_0(\mathbb Q_0^\ast)$ and $\mathcal{R}_1(\mathbb Q_1^\ast)$.
\end{proof}
\fi

\subsection{Proof of Lemma~\ref{lem:stability}}\label{app:stability}

\begin{proof}
Fix $\lambda,\lambda'\in[0,1]$ and define
\[
g_\lambda(\mu)\;:=\;W_2^2(\mu,Q_c^\star(\lambda)).
\]
By definition of the supremum,
\begin{align*}
\bigl|R_c(Q_c^\star(\lambda)) - R_c(Q_c^\star(\lambda'))\bigr|
&= \Bigl|\sup_{\mu\in C_c} g_\lambda(\mu)
      - \sup_{\mu\in C_c} g_{\lambda'}(\mu)\Bigr| \\
&\le \sup_{\mu\in C_c} \bigl|g_\lambda(\mu) - g_{\lambda'}(\mu)\bigr|.
\end{align*}
For any fixed $\mu\in C_c$, let
$a := W_2(\mu,Q_c^\star(\lambda))$ and
$b := W_2(\mu,Q_c^\star(\lambda'))$.
Using the identity $|a^2-b^2| = |a-b|(a+b)$ together with the reverse triangle
inequality for $W_2$, we obtain
\[
\bigl|g_\lambda(\mu) - g_{\lambda'}(\mu)\bigr|
\le \bigl(a+b\bigr)\,
     W_2\bigl(Q_c^\star(\lambda),Q_c^\star(\lambda')\bigr).
\]

Next, since $C_c=\{\mu:W_2(\mu,P_c)\le\varepsilon_c,\,
W_2(\mu,P_{cf})\le\delta_c\}$, we have for any $\mu\in C_c$ and
$\lambda\in[0,1]$,
\[
W_2(\mu,Q_c^\star(\lambda))
\le W_2(\mu,P_c) + W_2(P_c,Q_c^\star(\lambda))
\le \varepsilon_c + W_2(P_c,P_{cf}),
\]
where the last inequality follows from the fact that
$(Q_c^\star(\lambda))_{\lambda\in[0,1]}$ is a Wasserstein geodesic between
$P_c$ and $P_{cf}$.
An analogous bound holds for $W_2(\mu,Q_c^\star(\lambda'))$.
Moreover, by the constant-speed property of the geodesic,
\[
W_2\bigl(Q_c^\star(\lambda),Q_c^\star(\lambda')\bigr)
= |\lambda-\lambda'|\,W_2(P_c,P_{cf}).
\]
Combining these bounds yields
\[
\bigl|g_\lambda(\mu) - g_{\lambda'}(\mu)\bigr|
\le 2\bigl(\varepsilon_c + W_2(P_c,P_{cf})\bigr)
   W_2(P_c,P_{cf})\,|\lambda-\lambda'|.
\]
Taking the supremum over $\mu\in C_c$ concludes the proof.
\end{proof}

\begin{algorithm}[t]
\caption{Optimization for discrete prototypes used in experiments}
\label{alg:proto_opt}
\begin{algorithmic}[1]
\REQUIRE Encoded point clouds $\mathcal D_0,\mathcal D_1,\mathcal D_{\mathrm{cf}} \subset \mathbb R^d$;
support size $M$; iterations $T_{\max}$; learning rate $\eta$; random seed $s$;
Sinkhorn parameters $\texttt{blur}$.
\ENSURE Prototypes $\widehat{\mathbb Q}_0,\widehat{\mathbb Q}_1$.

\STATE Set seed $s$.
\STATE Define entropic OT surrogate $\widetilde W_2^2(\cdot,\cdot)$ using Sinkhorn with parameters $\texttt{blur}$
  and uniform weights for all discrete measures.
\FOR{each class $c\in\{0,1\}$}
   \STATE Derive $\lambda_c$ using $\widetilde W_2^2$ between
    $\mathcal D_{\mathrm{cf}}$, $\mathcal D_c$, and $\mathcal D_{1-c}$.
\ENDFOR

\STATE \STATE \textbf{Initialization.} For each class $c\in\{0,1\}$,
randomly sample $M$ points without replacement from $\mathcal D_c$
to initialize the support locations
$\mathbb Q_c \in \mathbb R^{M\times d}$;
set support weights to uniform (fixed)
\STATE Initialize Adam optimizer over support locations $\{\mathbb Q_0,\mathbb Q_1\}$ with learning rate $\eta$.

\FOR{$t=1$ to $T_{\max}$}
    \STATE $L_0 \leftarrow (1-\lambda_0)\, \widetilde W_2^2(\mathbb Q_0,\mathcal D_0) + \lambda_0\, \widetilde W_2^2(\mathbb Q_0,\mathcal D_{\mathrm{cf}})$.
    \STATE $L_1 \leftarrow (1-\lambda_1)\, \widetilde W_2^2(\mathbb Q_1,\mathcal D_1) + \lambda_1\, \widetilde W_2^2(\mathbb Q_1,\mathcal D_{\mathrm{cf}})$.
    \STATE $\mathcal L \leftarrow L_0 + L_1$.
    \STATE Take an Adam step on the support locations of $\mathbb Q_0,\mathbb Q_1$ to minimize $\mathcal L$.
\ENDFOR

\STATE Set $\widehat{\mathbb Q}_0 \leftarrow \mathbb Q_0$
\STATE Set $\widehat{\mathbb Q}_1 \leftarrow \mathbb Q_1$
\STATE \textbf{return} $\widehat{\mathbb Q}_0, \widehat{\mathbb Q}_1$
\end{algorithmic}
\end{algorithm}

\subsection{Theoretical Guarantee of Convergence and Sinkhorn Divergence}\label{app:sinkhorn}
Let \( \mathbb{P}_0, \mathbb{P}_1, \mathbb{P}_{cf} \in \mathcal{P}_2(\mathcal{X}) \) be probability measures with finite second moments supported on a compact metric space \( \mathcal{X} \subseteq \mathbb{R}^d \) and $\lambda_c \in [0,1]$ constants. Assume the optimization is carried out over the 2-Wasserstein space \( W_2(\mathcal{X}) \).
\begin{remark}
Our loss function $\mathcal{L}$ defined as:
\begin{equation*}
\mathcal{L}(\mathbb{Q}_0, \mathbb{Q}_1)=
\sum_{c\in\{0,1\}}
\bigl(
(1-\lambda_c)W_2^2(\mathbb{Q}_c,\mathbb{P}_c)
+
\lambda_c W_2^2(\mathbb{Q}_c,\mathbb{P}_\mathrm{cf})
\bigr),
\end{equation*} admits a unique minimizer \( (\mathbb{Q}_0^\star, \mathbb{Q}_1^\star) \in \mathcal{P}_2(\mathcal{X}) \times \mathcal{P}_2(\mathcal{X}) \). 
\end{remark}

 We can decompose the loss function \( \mathcal{L}(\mathbb{Q}_0, \mathbb{Q}_1) \) in two independent parts 
 \begin{equation*}
\mathcal{L}_{\mathbb{Q}_0}+\mathcal{L}_{\mathbb{Q}_1},
\end{equation*} with 
\begin{equation*}\mathcal{L}_{\mathbb{Q}_c} =\bigl(
(1-\lambda_c)W_2^2(\mathbb{Q}_c,\mathbb{P}_c)
+
\lambda_c W_2^2(\mathbb{Q}_c,\mathbb{P}_\mathrm{cf})
\bigr), \text{ for } c \in \{0,1\},
 \end{equation*} 
which is essentially a version of a Wasserstein barycenter in Eq.~\eqref{eq:barycenter} with \(\mu = \mathbb{Q}_c\), \(\lambda_i = \lambda_c, (1- \lambda_c)\), \(N=2\), and \(\mu_1\) and \(\mu_2\) the probability measures with finite second moments \(\mathbb{P}_c\) and \(\mathbb{P}_\mathrm{cf}\) respectively.
It is also shown that the optimal solution for \(p=2\), is the geodesic curve provided by the McCann's interpolant \citep{mccann1997convexity, agueh2011barycenters}. As a sum of two Wasserstein barycenters \( \mathcal{L}(\mathbb{Q}_0, \mathbb{Q}_1) \) admits thus an optimal solution \((\mathbb{Q}_0^*, \mathbb{Q}_1^*) \).

\paragraph{Approximation}
Theoretical results in Sections~\ref{method_bary}-~\ref{subsec:cls} are derived for the exact 2-Wasserstein distance $W_2$.
In practice, we replace $W_2$ with the Sinkhorn divergence $S_\epsilon$
to obtain a smooth and computationally efficient objective using entropic regularization.
This corresponds to optimizing a regularized (smoothed) approximation of the
Wasserstein barycenter problem rather than the exact Wasserstein DRO objective.
As $\epsilon \to 0$, the Sinkhorn divergence converges to $W_2$,
and the solution approaches the true Wasserstein barycenter. %The previous theoretical guarantees remain untouched, since Sinkhorn divergence also provids convexity (strong convexity), coercity, and continuity, this being said, all conclusions remain valid. 
For a regularization parameter $\epsilon$, the entropically regularized Wasserstein distance is defined as follows:
    \begin{equation*}
    W_{\epsilon,2}^2 (\mu, \nu) := \min_{\gamma \in \Pi(\mu, \nu) } \underbrace{\int \|y-x \|_2^2 d \gamma(x,y)}_\text{transport cost} + \underbrace{\epsilon H(\gamma|\mu\otimes \nu)}_{\text{relative entropy}},
\end{equation*}
Sinkhorn divergence is defined as:
\begin{equation*}
    S_\epsilon(\mu,\nu) := W_{\epsilon,2}^2 (\mu, \nu) - \frac{1}{2} W_{\epsilon,2}^2(\mu,\mu) - \frac{1}{2} W_{\epsilon,2}^2 (\nu,\nu).
\end{equation*}
The modified loss function is then 
\begin{equation*}
\mathcal{L}_{\epsilon}(\mathbb{Q}_0, \mathbb{Q}_1)=
\sum_{c\in\{0,1\}}
\bigl(
(1-\lambda_c)S_\epsilon(\mathbb{Q}_c,\mathbb{P}_c)
+
\lambda_c S_\epsilon(\mathbb{Q}_c,\mathbb{P}_\mathrm{cf})
\bigr).
\end{equation*}
Approximation quality increases as $\epsilon \rightarrow 0$: $W_{\epsilon,2}^2 (\mu, \nu) \rightarrow W_{2}^2 (\mu, \nu)$, therefore the minimizers of $\mathcal{L}_{\epsilon}(\mathbb{Q}_0, \mathbb{Q}_1)$ converge to the minimizers of $\mathcal{L}(\mathbb{Q}_0, \mathbb{Q}_1)$.

\citet{chizat2020faster} show that for $\epsilon \approx n^{-1/(d'+4)}$, they achieve with probability $1-\theta$ that $|S_{\epsilon,n}-W_2^2|\lesssim n^{-2/(d'+4)}+n^{-1/2}\sqrt{\log (2/\theta)}$, with $n$ being the number of independent samples in $\mathbb{R}^d$ and $d'$ denoting $2 \lfloor d/2 \rfloor$. When choosing $n \gtrsim \log(2/\theta)\varepsilon^{-(d'+4)/2}$, they achieve the desired $\varepsilon$-accuracy with $1-\theta$. Thus they conclude a computational complexity of $O(n^2/(\epsilon\varepsilon))$.

\subsection{Sensitivity to Sinkhorn Regularization}\label{app:sinkhornblur}
The Sinkhorn algorithm introduces a regularization parameter $\epsilon$ that controls the trade-off between computational efficiency and approximation of the exact Wasserstein distance.

We ablate the blur parameter on Adult (query size $n=50$) by sweeping the Sinkhorn blur ($\epsilon$) from 0.2 to 0.005. Fidelity remains stable across the range of values, while the reference-set mean margin $\mathbb{E}_{x}| W_2(x, Q_1^*) - W_2(x, Q_0^*) |$ decreases slightly from 82.66 to 81.08. This indicates that smaller blur leads to a modest reduction in margin, while the predicted labels remain largely unchanged, signaling stable decision behavior despite changes in regularization.

\subsection{Fairness}\label{app:fairness}
Let $\mathbb{P}$ denote the data-generating distribution over $(X,Y,S)$, where $S\in\{0,1\}$ is the sensitive attribute, $X$ the random input variable, and $Y\in\{0,1\}$ the class label as a random variable with realizations $y\in\{0,1\}$.

Demographic parity \citep{dwork2012fairness,feldman2015certifying} describes that the probability of an individual to be assigned to a class $y$ should not depend on the sensitive group $S$:
\begin{equation*}  \textit{Demographic Parity}\quad P(\hat{Y}=y\mid S=0)=P(\hat{Y} = y \mid S=1).
   \end{equation*}

Equalized Odds \citep{hardt2016equality} describes that the prediction accuracy should not depend on the sensitive group $S$:
\begin{equation*}  \textit{Equalized Odds}\quad 
\begin{cases}
    P(\hat{Y}=1\mid S=0, Y=0)=P(\hat{Y} = 1 \mid S=1, Y=0)\\
    P(\hat{Y}=1\mid S=0, Y=1)=P(\hat{Y} = 1 \mid S=1, Y=1).
\end{cases}
   \end{equation*}

We adapt \emph{threshold-invariant demographic parity disparity} (TIDP) and \emph{threshold-invariant equalized odds disparity} (TIEO) from \citet{DBLP:conf/uai/Chen020}.
Specifically, we define \emph{threshold-invariant demographic parity disparity} (TIDP) as the Wasserstein-1 distance between the score distributions conditioned on different values of $S$:
\[
D_{\mathrm{DTIDP}}
=
W_1\!\left(
\begin{array}{l}
p_{\#}P(X\mid S=0),\\[2pt]
p_{\#}P(X\mid S=1)
\end{array}
\right).
\]
Analogously, \emph{threshold-invariant equalized odds disparity} (TIEO) compares the score
distributions conditioned on both $S$ and the label $Y\in\{0,1\}$:
\[
D_{\mathrm{DTIEO}}(y)
=
W_1\!\left(
\begin{array}{l}
p_{\#}P(X\mid S=0,Y=y),\\[2pt]
p_{\#}P(X\mid S=1,Y=y)
\end{array}
\right).
\]
These diagnostics capture disparities
that persist across all decision thresholds, as they operate on entire score distributions. Moreover, they  quantify how different sensitive groups are positioned relative to the barycentric decision geodesic.
 %  \begin{equation*}  \textit{demographic parity}\quad P(\hat{Y}=y\mid S=0)=P(\hat{Y} = y \mid S=1)
 %  \end{equation*}
 %  Demographic parity describes that the probability of an individual to be assigned to a class $y$ should not depend on the sensitive group $S$. In our case the classifiers decision is based on the closest distance between the instance $x$ and the class barycenter $\mathbb{Q}_c$ in Wasserstein space:
 %  \begin{equation*}
  % \hat{Y}(x) = \arg\min_{c \in \{0,1\}} W_2(x, \mathbb{Q}_c).
 %  \end{equation*}
 %  This depends on how far the instance lies away from the decision region $\mathbb{Q}_{0.5}$. By comparing this distance to the decision region according the distribution of the individual sensitive groups we can assess differences in treatment.
 %\(\Delta\) compares how easy it is for the different sensitive groups to reach the decision region $\mathbb{Q}_{0.5}$, which depends on how the respective group distributions are positioned relative to the boundary $\mathbb{Q}_{0.5}$. 
 %\begin{equation*}
 %   \Delta = \left| W_2(\mu_0^{(0)}, \mathbb{Q}_{0.5}) - W_2(\mu_0^{(1)}, \mathbb{Q}_{0.5}) \right|,
%\end{equation*}
% the \textbf{larger} its value, the more asymmetrically the classifier treats different groups relative to its decision structure. The larger $\Delta$, less symmetry, the higher $\vert P(\hat{Y}= 0 \mid S=0) - P(\hat{Y} = 0 \mid S=1) \vert$ the unequal treatment. 

\newpage
\section{Experimental Setup}\label{app:experimental}
The following subsections deal with datasets, and implementation as well as experiment details.

\subsection{Description and Pre-processing of Real-World Benchmark Datasets}

To evaluate our framework, we employ four well-known publicly available tabular datasets: \textbf{Adult Income}, \textbf{California Housing}, \textbf{COMPAS}, and \textbf{HELOC}. Below are their key characteristics:
\begin{itemize}
    \item \textbf{Adult Income}: Derived from the 1994 U.S. Census, this dataset captures demographic and financial attributes such as education level, marital status, age, and annual earnings. The classification task involves predicting whether an individual’s income exceeds \$50,000 (denoted as $y=1$). The original dataset consists of 32,561 entries, with 24,720 labeled as $y=0$ and 7,841 as $y=1$. To balance the classes, we randomly selected 7,841 samples from $y=0$, resulting in a final dataset of 15,682 entries. The dataset includes 6 numerical and 8 categorical features, with the latter converted to integer encodings. All features were normalized to $[0,1]$.
    \item \textbf{California Housing Prices} (Housing): The dataset consists of data for houses in a district in California collected in the 1990 census. The dataset consists of 20,640 samples and 9 features. The target variable is the average house value, we transform it into a binary target variable by setting the median house value as a threshold.
    \item \textbf{COMPAS}: Developed to study racial bias in recidivism prediction algorithms, this dataset contains 6,172 entries with 20 numerical features. The target variable,  \texttt{is\_recid} divides the data into 3,182 ($y=0$) and 2,990 ($y=1$) samples. Feature values were normalized to $[0,1]$.
    %\item \textbf{Default of Credit Card Clients} (DCCC): This dataset tracks credit card payment behaviors in Taiwan, with the goal of predicting defaults ( \texttt{default.payment.next.month}). The original dataset has 30,000 entries (23,364 for $y=0$, 6,636 for $y=1$). To address class imbalance, we randomly subsampled 6,636 instances from $y=0$. Categorical features were integer-encoded, and all attributes were normalized to [0,1].
    %\item \textbf{EEG Eye state}: The dataset consists of one continuous EEG measurement for 117 seconds, that was acquired with an Emotiv EEG Neuroheadset \cite{uci-eeg-eye-state}. The binary classification in this dataset is denoted by whether the eyes are open (0) or closed (1), which was detected with camera during the EEG acquisition. 
    \item \textbf{Home Equity Line of Credit} (HELOC): This dataset records credit risk assessments for customers seeking home equity loans. It comprises 10,459 entries, each with 23 numerical features. The prediction target,  \texttt{is\_at\_risk} identifies customers likely to default. The dataset is moderately imbalanced, with 5,000 samples for $y=0$ and 5,459 for $y=1$. %For our experiments, we used a subset of 10 key features, including  \texttt{estimate\_of\_risk},  \texttt{net\_fraction\_of\_revolving\_burden}, and  \texttt{percentage\_of\_legal\_trades}, all scaled to $[0,1]$.
\end{itemize}

\paragraph{Pre-processing.}
We adopt a unified data loading and pre-processing pipeline across all datasets. For each dataset, we construct training, validation, and test splits using stratified sampling, holding out $20\%$ for testing and $20\%$ of the remaining data for validation. Target labels are standardized by stripping whitespace and removing trailing punctuation where applicable. Features containing missing indicators (e.g., ``?'', or dataset-specific sentinel values) are mapped to \texttt{NaN}, infinite values are removed, and rows with missing targets are discarded. Features are then automatically partitioned into continuous and categorical variables based on data type. Continuous features are median-imputed and scaled using a RobustScaler with the 5--95 percentile range, while categorical features are imputed with the most frequent value and one-hot encoded with unknown-category handling. All transformations are fit on the training split only and applied to validation and test data. We further store metadata such as output feature names, continuous and categorical index locations, one-hot category mappings, and scaler parameters, which are subsequently used both for model training and for decoding counterfactual examples back into raw feature space. For the California Housing dataset, we follow prior work and binarize the target at the median value to obtain a balanced binary prediction task. %To prepare the data for optimal transport computations, we combined all instances (original and counterfactual) and applied one-hot encoding/target encoding using \texttt{OneHotEncoder} and \texttt{TargetEncoder}. The resulting matrix was standardized using \texttt{StandardScaler}. Encoded feature matrices were then split into three sets: $\mathcal{D}_0$ (class 0 instances), $\mathcal{D}_1$ (class 1 instances), and $\mathcal{D}_{cf}$ (counterfactuals for $\mathcal{D}_0$).

\emph{Importantly, we do not assume access to the target model’s internal pre-processing
pipeline.}
While a unified pipeline is used within each experimental run to ensure internal
consistency, the reconstruction model does not share or reuse the exact feature
transformations employed by the target model.
In this sense, pre-processing is treated as an integral part of the black-box
system and is not exposed to the auditor.

We note that most existing model reconstruction and CF-based extraction
methods implicitly assume that the attacker operates in the same feature
representation as the target model, and therefore do not explicitly study the
effect of pre-processing or representation misalignment.
In practice, however, differences in normalization, encoding, imputation, or
feature engineering can substantially alter the geometry of the input space and
thereby impact reconstruction performance.

This distinction is particularly relevant when reproducing prior baselines:
we observe that reconstruction fidelity can be noticeably lower than the values
reported in earlier work, which we attribute in part to the absence of shared
pre-processing assumptions.
We view this setting as more reflective of realistic auditing scenarios, where the
feature transformation pipeline is typically proprietary and inaccessible.

\subsection{Implementation Details}

All experiments were implemented in Python 3.12 and conducted on a workstation equipped with an NVIDIA RTX 3090 GPU, with 124 GB of RAM running on Ubuntu 24.04. We used the following libraries for the implementation: \texttt{pandas}, \texttt{scikit-learn}, \texttt{geomloss}, \texttt{numpy}, \texttt{torch} and official code for the CF generation methods. Hyperparameters are set according to a grid search. Our code is included in the supplementary material.

%The barycenter optimization was performed for 10 alternating iterations between $Q_0$ and $Q_1$, and the number of support points per barycenter was kept small ($k=400$) for both computational efficiency and interpretability.
\subsection{Experiment Details}
The following sections present details regarding the experiment setup.

\paragraph{Target classifiers.}
Across all datasets, we treat several standard tabular models as black-box target classifiers. Following the experimental protocol in the main paper, we train a logistic regression model (LR), a multilayer perceptron (MLP), and tree-based classifiers on the pre-processed feature representations obtained from our ColumnTransformer pipeline. The LR model is implemented as a single linear layer with a sigmoid output, optimized using Adam and a binary cross-entropy loss with logits, with early stopping on validation loss. The main MLP consists of two hidden layers with 20 and 10 units, respectively, with ReLU activations and a dropout rate of 0.1, followed by a single-unit output layer; it is trained with Adam, a learning rate of $10^{-3}$ and weight decay of $10^{-4}$, again with early stopping. For tree-based targets, we employ a decision tree (DT) with maximum depth $6$ and minimum leaf size $20$.
%a random forest (RF) with $400$ trees, $\texttt{max\_features}=\texttt{"sqrt"}$, and minimum leaf size $2$, and a histogram-based gradient boosting classifier (HGB) with maximum depth $6$, learning rate $0.05$, $300$ boosting iterations, and $\ell_2$-regularization $10^{-3}$. 
All models are trained on the training split only and evaluated on the held-out test split using fidelity as the primary metric.

\paragraph{Baselines.}
We compare our method against several representative reconstruction baselines that incorporate counterfactual explanations. Most of them are designed for interactive online access, however, since our setting is constrained to offline-only, we adapt the baselines and restrict them to offline data too. The first baseline, \textit{SAMPLES}, follows \citet{DBLP:journals/corr/abs-2009-01884} and simply augments the observed labeled samples with their CFs, treating CFs as ordinary labeled points when training standard classifiers such as LR, MLP and DT on the pre-processed features. The second baseline, \textit{CCA} (Counterfactual Clamping Attack), implements the counterfactual clamping loss of \citet{DBLP:conf/nips/DissanayakeD24}: we train a small neural surrogate network with hidden layers of sizes $(20,10,5)$ and ReLU activations, followed by a sigmoid output, on tri-valued labels $y \in \{0,0.5,1\}$, where $0.5$ denotes CFs. For regular samples ($y\in\{0,1\}$) we use standard binary cross-entropy, while CFs are encouraged to have predictions above a threshold $k=0.5$ via the clamping objective, which only penalizes CFs whose predicted probability does not exceed $k$. For completeness, we also report results for TRA \citep{khouna2025counterfactuals}, which is competitive in our setting but restricted to tree-based models; we therefore instantiate TRA only for tree targets and include it as an additional baseline in the appendix.

\paragraph{CF generation methods.}
We implement several CF generators in a unified framework. The \emph{1-Nearest-Neighbor} (1NN) method searches over a pool of raw samples whose predicted label matches the desired outcome and selects the point with minimal mixed cost ($\ell_1$ distance on continuous features plus Hamming distance on categorical features). The differentiable \emph{MCCF} variant optimizes a CF directly in encoded feature space using gradient-based updates on a scaled representation, with continuous dimensions updated freely and categorical groups projected to (approximate) one-hot vectors via straight-through Gumbel--Softmax; the objective trades off a prediction loss that pushes the output to the target class against an $\ell_1$-style proximity term. Our \emph{DiCE} implementation follows a randomized search strategy in encoded space, sampling perturbations of continuous dimensions and randomly flipping categorical one-hot entries only for a specified subset of features, and returns the first candidate that achieves the desired prediction with minimal raw-space cost. For \emph{C-CHVAE}, we train a tabular VAE in encoded space and then optimize in the latent space to find codes whose decoded samples both lie close to the original point (in encoded distance) and are classified as the target label; successful decoded candidates are mapped back to raw feature space via the inverse preprocessing pipeline. Finally, \emph{ROAR} is implemented as a robust MCCF-style optimizer in encoded space replacing the standard prediction term with a robust probability under weight noise, aggregating predictions over multiple noisy parameter draws (either by averaging or worst-case aggregation), while using the same continuous and categorical distance regularizers; this yields CFs that remain valid under moderate target-model parameter shifts.

%\begin{figure*}[t]
%    \centering
%    \includegraphics[width=.99\linewidth]{Figures/orig_vs_cf_adult_lr_multi_cf_kde.pdf}
%    \caption{
%Kernel density estimates of target model predicted scores $\hat{y}_m(x)$ for both class inputs, CFs; varying CF methods.
%Concentration well above decision threshold 0.5 means CFs  overshoot the boundary, confirming RECAST's approach of avoiding near-boundary assumptions.
%Note that overshoot observed in prediction scores does not imply geometric proximity of $\mathbb{P}_{\mathrm{cf}},\mathbb{P}_1$ in Wasserstein space.
%}\label{fig_cf_distribution}
%\end{figure*}
As shown in Fig.~\ref{fig_cf_distribution}, CF generation methods exhibit substantial variability in the prediction scores assigned by the target model, with some overshooting the decision threshold. 
CCA assumes that CFs lie close to the decision boundary and leverages such near-boundary
samples to shape the surrogate decision surface.
When CFs instead yield prediction scores far above $0.5$, this assumption is violated
and reconstruction performance deteriorates. By contrast, RECAST neither interprets CFs as true samples (like \textit{SAMPLES}) nor assumes proximity to the decision boundary.
Instead, it incorporates CF information through CF-consistent distributional geometry,
which enables stable reconstruction even under heterogeneous CF generation behaviors.

%We trained a logistic regression classifier using the pre-processed dataset. To probe the decision boundary, we employed the DiCE (Diverse Counterfactual Explanations) library to generate counterfactual examples. From the test set, we selected $k=100$ instances predicted as class 0 and $k=100$ predicted as class 1. For each class 0 instance, we generated one counterfactual aimed at flipping the prediction to class 1. These counterfactuals were used to build a geometric transport bridge between observed data and their hypothetical flips.

%\subsubsection{Baselines} For Baseline 1, we simply treat counterfactuals as samples from class 1 and build a logistic regression and use it as surrogate classifier. For Baseline 2, we use the Counterfactual Clamping loss designed for treating counterfactuals as lesser samples, then we use a MLP (inner layers 20,10) as the surrogate base.  

\paragraph{Wasserstein Barycenter Computation}

We compute entropically regularized Wasserstein costs using the
\texttt{SamplesLoss("sinkhorn", p=2)} implementation from GeomLoss
\cite{feydy2019interpolating}, which provides a differentiable Sinkhorn
approximation of $W_2$ between empirical measures.
To construct interpretable class representations, the experiment leverages 2-Wasserstein distances using the Sinkhorn algorithm implemented in the \texttt{geomloss} library. The objective is to learn two barycenters \(\mathbb{Q}_0 \) and \(\mathbb{Q}_1 \) that serve as prototypical representatives of each class in the transformed feature space.
The loss function incorporates the following components: Wasserstein distance from class 0 samples to \(\mathbb{Q}_0 \) and \(\mathbb{Q}_1 \).
Each barycenter is initialized with $M$ support points (default $M=50$). Unless stated otherwise, all experiments use a fixed support size of $M=50$. Optimization is performed using the Adam optimizer with a learning rate of 0.01 for 200 epochs. The formulation of the loss function follows Equation~\ref{objective} from Section~\ref{sec:method},
with all Wasserstein distances replaced by their entropically regularized
Sinkhorn surrogates. 

\paragraph{Hyperparameter Sensitivity.}
We conduct an extensive ablation study to assess the sensitivity of RECAST with respect to its optimization and regularization parameters.
Specifically, we perform a grid search over the learning rate
$\eta \in \{10^{-3}, 5 \times 10^{-3}, 10^{-2}\}$,
the number of training epochs $\{100, 200, 400\}$.
In addition, since Wasserstein barycenters are computed via an entropically regularized Sinkhorn approximation,
we evaluate the effect of the entropic regularization parameter
$\texttt{blur} \in \{10^{-3},\, 5\times10^{-3},\, 10^{-2},\, 5\times10^{-2}\}$.

Unless stated otherwise, we fix
$\eta = 10^{-2}$,
$\texttt{blur} = 5 \times 10^{-2}$,
and train for $200$ epochs in all experiments,
which we found to provide stable numerical behavior in practice.
Figure~\ref{fig:convergence} illustrates the convergence behavior of the objective in Equation~\ref{objective}
on the Adult dataset under different query sizes $k$.

%\paragraph{Alternative Prototype Constructions.}
%We compare Wasserstein barycenters against several alternative prototype representations, including Euclidean mean prototypes, k-medoids-based prototypes, and kernel mean embeddings using MMD.
%While these methods provide reasonable approximations in high-data regimes, they consistently yield lower fidelity and higher variance under limited queries.
%This suggests that Wasserstein barycenters are particularly effective at preserving geometric structure induced by counterfactuals, which is critical in low-query reconstruction settings.

\begin{figure}[h!]
    \centering
    \includegraphics[width=0.95\textwidth]{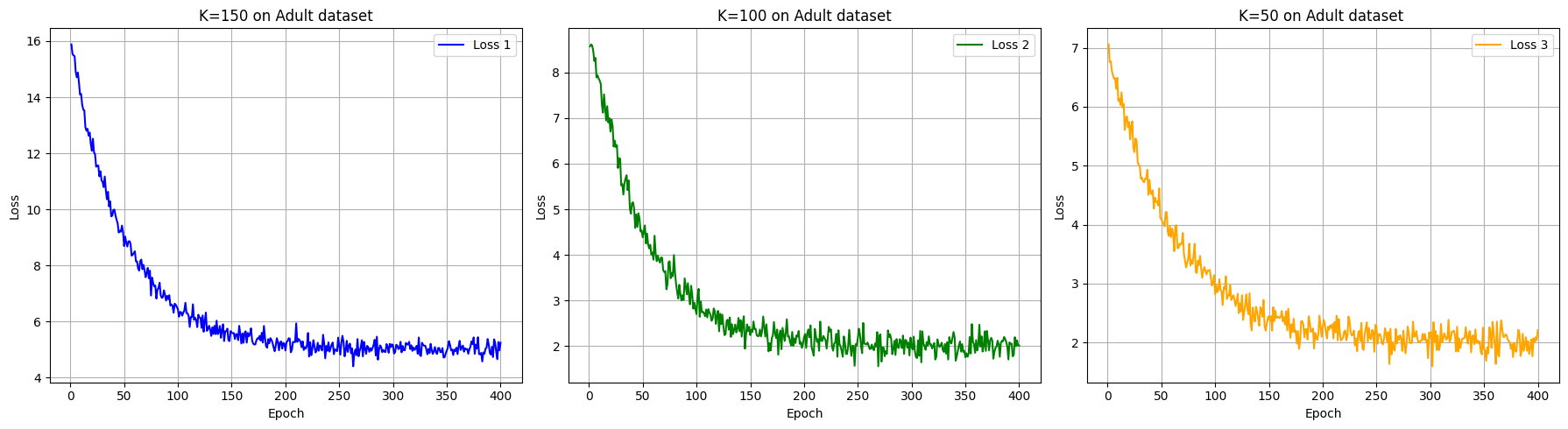}  
    \caption{Loss convergence over epochs on Adult dataset.}
    \label{fig:convergence}
\end{figure}

\paragraph{Repetition} All quantitative analyses were averaged across 10 independent repetitions to account for variability in sampling and model behavior.

\newpage
\section{Additional Experiments}\label{app:ablation}
This section of the Appendix includes additional experiments, that could not be included in the main paper due to space limitations.
\subsection{Ablation Study}\label{app:ablation_ori}

We conduct a ablation study to assess RECAST with respect to its design choices.
First, we analyze the role of the class-conditional weighting $\lambda_c$ by selectively disabling each component. As shown in \cref{tab:ablation_all}, removing either component degrades fidelity, while disabling both leads to substantial performance loss, highlighting their complementary roles.
Second, we replace Wasserstein barycenters with other prototype constructions, such as the Euclidean mean, medoid-based prototypes and MMD. Here, CFs can only be included as labeled data. We emphasize that this difference is not a design choice but a structural limitation of the compared prototype representations. Euclidean mean and medoid prototypes operate on point estimates and do not admit a natural mechanism to incorporate CFs as soft, distributional constraints. These alternatives consistently underperform, particularly in low-query regimes, underscoring the importance of Wasserstein geometry for capturing CF-induced distributional structure. However, MMD can be plugged in our design easily with similar $\lambda_c$, with only Wasserstein distance changed to MMD.
Finally, we evaluate the effect of counterfactual information by removing CFs from the reconstruction.
As shown in \cref{tab:ablation_all}, excluding CFs leads to a substantial drop in fidelity, supporting their role in shaping boundary-relevant geometry rather than pinpointing the boundary itself.
Sensitivity to optimization hyperparameters (learning rate and training epochs) is reported in Appendix~\ref{app:ablation}.
%We further ablate the use of CFs by reconstructing models using only the original query data.
%\todo[inline, color=nice!30]{extend to other modality}
%As shown in Table~5, removing CFs leads to a substantial drop in fidelity, highlighting their role in shaping boundary-relevant geometry rather than pinpointing the boundary itself.

\begin{table}[h]
\caption{Ablation study of RECAST (Adult, MLP, query size = 100).}
\label{tab:ablation_all}
\centering
\resizebox{0.7\linewidth}{!}{
\begin{tabular}{lccccc}
\toprule
Method Variant 
& CFs 
& Prototype 
& $\lambda_c$ 
& Loss $\downarrow$ 
& Fidelity (\%) $\uparrow$ \\
\midrule

RECAST (Full) 
& \checkmark 
& Wasserstein 
& Ours 
& \textbf{2.18} 
& \textbf{90.8} \\

\midrule
Fixed $\frac{\lambda_c}{1-\lambda_c}$ 
& \checkmark 
& Wasserstein 
& $\frac{\lambda_c}{1-\lambda_c}=0.5$
& 2.61 
& 88.7 \\

\midrule
Euclidean prototype 
& \checkmark 
& Euclidean mean 
& --
& 3.12 
& 82.4 \\

Medoid prototype 
& \checkmark 
& Medoid 
& -- 
& 2.88 
& 84.1 \\

Maximum Mean Discrepancy
& \checkmark 
& RBF-MMD
& Similar to Ours
& 2.65 
& 87.3 \\

\midrule
No CFs 
& $\times$ 
& Wasserstein 
& -- 
& 3.05 
& 81.6 \\

\bottomrule
\end{tabular}}
\end{table}

\subsection{Imbalanced Data}
In our main experiments, we used the same value of query size for $\mathcal{D}_0$, $\mathcal{D}_1$, and $\mathcal{D}_{\text{cf}}$. 
To evaluate the robustness of our approach under imbalanced data conditions, we conducted additional experiments using varying sample sizes for the counterfactual dataset. These experiments simulate more realistic, unbalanced settings in which the sizes of $\mathcal{D}_0$, $\mathcal{D}1$, and $\mathcal{D}{\mathrm{cf}}$ may differ, while the total number of natural samples satisfies $|\mathcal{D}_0| + |\mathcal{D}_1| = 200$. As shown in Table~\ref{imbalance}, our method maintains strong performance even under these imbalanced conditions.

\begin{table}[h]
\centering
\caption{Fidelity under different class imbalance ratios.}\label{imbalance}
\resizebox{0.8\linewidth}{!}{
\begin{tabular}{l|ccc|ccc}
\toprule
 & \multicolumn{3}{c|}{20\% class 0 : 80\% class 1} & \multicolumn{3}{c}{80\% class 0 : 20\% class 1} \\
Dataset 
& SAMPLES & CCA & RECAST (Ours)
& SAMPLES & CCA & RECAST (Ours) \\
\midrule

Adult In.
& 0.751 $\pm$ 0.017 & 0.793 $\pm$ 0.018 & \textbf{0.848 $\pm$ 0.030}
& 0.702 $\pm$ 0.016 & 0.762 $\pm$ 0.023 & \textbf{0.819 $\pm$ 0.027} \\

COMPAS
& 0.385 $\pm$ 0.007 & 0.715 $\pm$ 0.018 & \textbf{0.795 $\pm$ 0.020}
& 0.352 $\pm$ 0.011 & 0.638 $\pm$ 0.022 & \textbf{0.738 $\pm$ 0.025} \\

HELOC
& 0.397 $\pm$ 0.014 & 0.604 $\pm$ 0.052 & \textbf{0.646 $\pm$ 0.071}
& 0.365 $\pm$ 0.016 & 0.561 $\pm$ 0.045 & \textbf{0.592 $\pm$ 0.059} \\

Housing
& 0.429 $\pm$ 0.008 & \textbf{0.666 $\pm$ 0.065} & \textbf{0.662 $\pm$ 0.112}
& 0.391 $\pm$ 0.010 & \textbf{0.624 $\pm$ 0.057} & 0.615 $\pm$ 0.088 \\

\bottomrule
\end{tabular}}
\end{table}

%\begin{table}[t]
%\centering
%\caption{Fidelity over $\mathcal{D}_{\text{ref}}$ for MLP target models with different complexity.}
%\begin{tabular}{lcccccccccc}
%\toprule
%\textbf{Dataset}
% & \multicolumn{9}{c}{$\mathcal{D}_{\text{test}}$} \\
%\cmidrule(lr){2-10}
%Target $\rightarrow$
% & \multicolumn{3}{c}{(20,10)}
% & \multicolumn{3}{c}{(20,10,5)}
% & \multicolumn{3}{c}{(20,20,10,5)} \\
% & \multicolumn{3}{c}{$n=100$}
% & \multicolumn{3}{c}{$n=100$}
 %& \multicolumn{3}{c}{$n=100$} \\
%\cmidrule(lr){2-4}\cmidrule(lr){5-7}\cmidrule(lr){8-10}
% & Base. & CCA & RECAST (Ours)
% & Base. & CCA & RECAST (Ours)
% & Base. & CCA & RECAST (Ours) \\
%\midrule
%%Adult In.
% & 0.88 & 0.92 & \textbf{0.93}
%% & 0.89 & 0.92 & \textbf{0.93}
% & 0.98 & 0.98 & \textbf{0.99} \\

%COMPAS
 %& 0.84 & 0.90 & \textbf{0.92}
 %& 0.86 & 0.91 & \textbf{0.93}
 %& 0.95 & 0.97 & \textbf{0.98} \\

%HELOC
 %& 0.88 & 0.92 & \textbf{0.93}
 %& 0.89 & 0.92 & \textbf{0.93}
 %& 0.98 & 0.98 & \textbf{0.99} \\

%Housing
% & 0.80 & 0.87 & \textbf{0.90}
% & 0.83 & 0.88 & \textbf{0.91}
% & 0.94 & 0.96 & \textbf{0.98} \\
%\bottomrule
%\end{tabular}
%\end{table}
\subsection{Experiments including Accuracy}
\label{app:accfidelity}
Table~\ref{tab:acc_fidelity} reports target model accuracy alongside surrogate accuracy and fidelity for RECAST, SAMPLES, and CCA on the Adult and COMPAS datasets at query sizes $n=100$ and $n=150$, using logistic regression as the target model and 1-nearest-neighbor as the CF method. 
We report both metrics because they capture different objectives: accuracy measures predictive performance against ground-truth labels, whereas fidelity measures agreement with the target model's predictions, including its errors. Consequently, the two metrics need not correlate, a surrogate can achieve high fidelity while having lower task accuracy if the target model itself makes systematic errors. 
RECAST achieves the highest fidelity across both datasets and query sizes, while remaining competitive on accuracy, demonstrating that behavioral agreement with the target is not achieved at the expense of predictive performance.

\begin{table}[t]
\centering
\caption{Target model accuracy, surrogate accuracy, and fidelity at query sizes $n=100$ and $n=150$ on Adult and COMPAS. Accuracy and fidelity need not correlate: fidelity measures agreement with the target model's predictions (including its errors), not ground-truth label accuracy. Best surrogate results per dataset and query size in \textbf{bold}.}
\label{tab:acc_fidelity}
\resizebox{0.55\columnwidth}{!}{%
\begin{tabular}{llccccc}
\toprule
\multirow{2}{*}{Dataset} & \multirow{2}{*}{Method} & \multirow{2}{*}{Target Acc.} 
  & \multicolumn{2}{c}{$n = 100$} & \multicolumn{2}{c}{$n = 150$} \\
\cmidrule(lr){4-5} \cmidrule(lr){6-7}
 & & & Acc. & Fid. & Acc. & Fid. \\
\midrule
\multirow{3}{*}{Adult}
  & Samples & 0.8463 & 0.7463 & 0.7800 & 0.7488 & 0.7679 \\
  & CCA     & 0.8463 & 0.5598 & 0.5097 & 0.5318 & 0.4797 \\
  & RECAST  & 0.8463 & \textbf{0.7741} & \textbf{0.8400} & \textbf{0.7700} & \textbf{0.8600} \\
\midrule
\multirow{3}{*}{COMPAS}
  & Samples & 0.6696 & \textbf{0.5983} & 0.7004 & 0.5424 & 0.7518 \\
  & CCA     & 0.6696 & 0.5405 & 0.5441 & 0.5376 & 0.5398 \\
  & RECAST  & 0.6696 & 0.5580 & \textbf{0.7560} & \textbf{0.5770} & \textbf{0.7860} \\
\bottomrule
\end{tabular}%
}
\end{table}

\subsection{Additional Experiments on Distribution shifts}
\label{app:adddistribution}
To investigate robustness under realistic distribution shift, we conduct an additional experiments using Folkstables \citep{ding2021retiring}. We train a logistic-regression target model on ACSIncome data from  California (CA), 2018 (acc=0.7860), and evaluate reconstruction under shift to Michigan (MI), 2014, a setting that combines both geographical and temporal variation. 
We generate CFs using 1-nearest-neighbour and evaluate the performance (fidelity and accuracy) on query budget (100, 150).

\begin{table}[t]
\centering
\caption{Accuracy and fidelity on Folktables (ACSIncome) under cross-domain distribution shift (train: CA 2018, evaluate: MI 2014) at query sizes 100 and 150.}
\label{tab:folktables_query_acc_fid}
\resizebox{0.45\columnwidth}{!}{%
\begin{tabular}{lccccc}
\hline
{Reconstruction} & \multicolumn{2}{c}{Query size = 100} & \multicolumn{2}{c}{Query size = 150} \\
\cline{2-5}
 & Acc. & Fid. & Acc. & Fid. \\
\hline
SAMPLES   & \textbf{0.7356} & 0.8641 & \textbf{0.7499} & 0.8596 \\
CCA  & 0.7302 & 0.6362 & 0.7251 & 0.6251 \\
RECAST & 0.6473 & \textbf{0.8661} & 0.7254 & \textbf{0.8886} \\
\hline
\end{tabular}}
\end{table}

Results are shown in Table~\ref{tab:folktables_query_acc_fid}. RECAST consistently achieves higher fidelity across query budgets (100 and 150), demonstrating robust recovery  under distribution shift. SAMPLES leads on accuracy but trails substantially on fidelity, consistent with the pattern observed in \cref{app:accfidelity}, accuracy and fidelity capture different objectives, and fidelity is the primary measure of reconstruction quality. These results demonstrate that RECAST maintains strong behavioral consistency even under substantial distribution shifts.

\subsection{Crossover Between RECAST and No-CF Reconstruction}
\label{app:crossover}

\begin{figure}[t]
    \centering
    \includegraphics[width=0.6\linewidth]{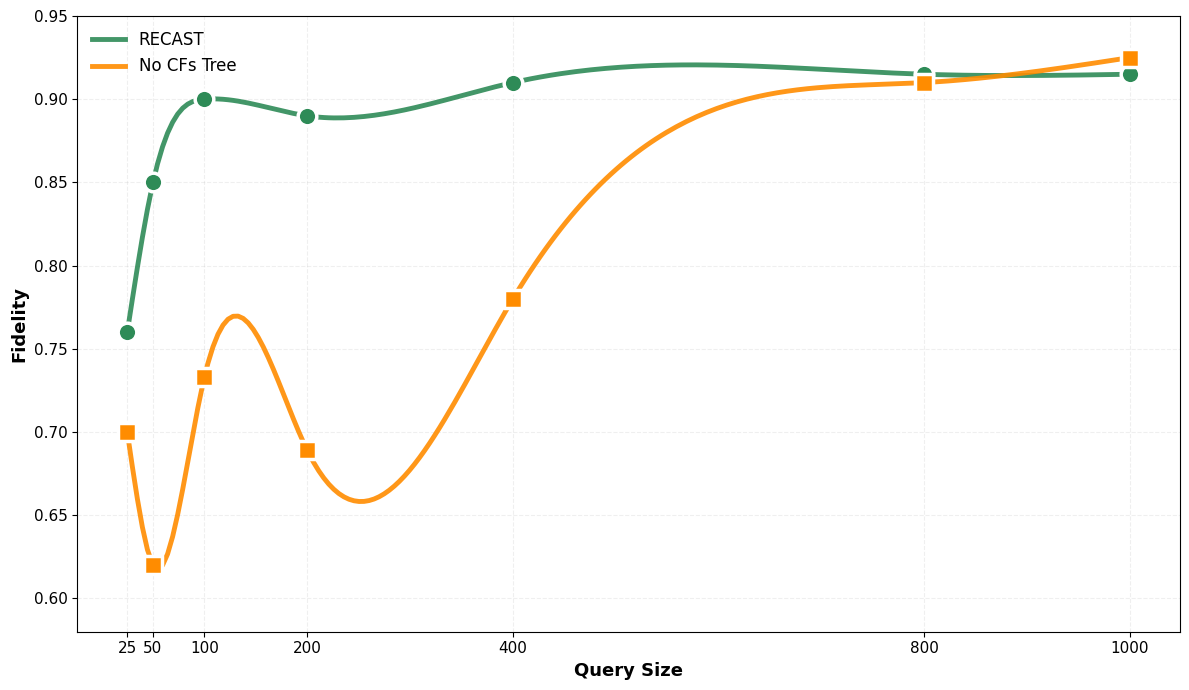}
    \caption{
    Crossover between CF-based and no-CF reconstruction on the Adult dataset.
    CF-based reconstruction achieves higher fidelity in low-data regimes, while the no-CF decision tree surrogate improves with increased natural samples and surpasses the CF-based method around $n \approx 1000$ per class.
    }
    \label{fig:crossover}
\end{figure}

We investigate when CF supervision becomes less critical for model reconstruction under increasing data availability.
Specifically, we compare RECAST with a no-CF baseline trained solely on natural samples, as the number of available samples grows.

We fix a target model trained on the Adult dataset and evaluate reconstruction fidelity, defined as the fraction of test instances on which the surrogate and the target model produce identical predictions.
For the no-CF baseline, we use a decision tree surrogate, which exhibits stable behavior in low-sample regimes while allowing increased expressivity as more data become available.

Figure~\ref{fig:crossover} illustrates a crossover pattern.
In low-data regimes, CF-based reconstruction achieves substantially higher fidelity, highlighting its strong sample-efficiency advantage.
As the number of natural samples increases, the performance gap narrows and reverses around $n \approx 1000$ samples per class, where the no-CF surrogate attains comparable or higher fidelity.

This pattern reflects a trade-off between data efficiency and surrogate expressivity in audit-oriented reconstruction settings.
Counterfactual supervision provides informative local constraints when observational data are scarce, but yields diminishing benefits once sufficient coverage of the input space is available.

\subsection{Data Modalities / High-Dimensional Data}\label{app:datamod}
Our framework is generally applicable to any data modality that can be embedded as vectors, e.g., pixel representations of visual data. Thus, we can apply RECAST to high-dimensional datasets after transforming the input data to latent space.

To show this, we perform an additional proof-of-concept on the well-known MNIST dataset \cite{deng2012mnist}. We load the dataset from the \texttt{sklearn} library, normalize the pixel values to be in-between $[0,1]$ and reduce the classification problem to binary (e.g., classification between 0 and 1). For the target classifier, we employ 66\% to train a standard sklearn MLP-classifier and generate CFs as the nearest neighbor belonging to the other class. RECAST is then trained on a subset of the training set in pixel-space ($d=784$). Across different binary tasks and sample sizes (around 100 - 500), RECAST consistently achieves a fidelity around 0.7 or higher, see \cref{fig:mnist}.
%\begin{table}[]
%    \centering
%    \caption{Fidelity scores for different binary MNIST classification problems. }
%    \label{tab:mnist}
%    \begin{tabular}{rrrrrrrrr}
%\toprule
%Samples & 0-1 & 0-8& 1-7& 3-5 &3-6& 3-8 &  4-7& 5-8\\
%\midrule
%0.01 & 0.9906 & 0.9311&0.8386&0.8104&0.6883&0.8692&0.7532 &0.7931\\
%0.02 & 0.9916&0.9287&0.8021&0.7793&0.7581&0.8566&0.7965&0.7980\\
%0.03 & 0.9918&0.9292 &0.7997&0.7768&0.7298&0.8690&0.7862&0.7957\\
%0.04 & 0.9918& 0.9261&0.7971&0.7867&0.7090&0.8818&0.7785&0.8058\\
%0.05 & 0.9916& 0.9265&0.7861&0.7953&0.7026&0.8828&0.7954&0.8049\\
%\bottomrule
%\end{tabular}
%\end{table}
\begin{figure}[b]
    \centering
    \includegraphics[width=0.6\linewidth]{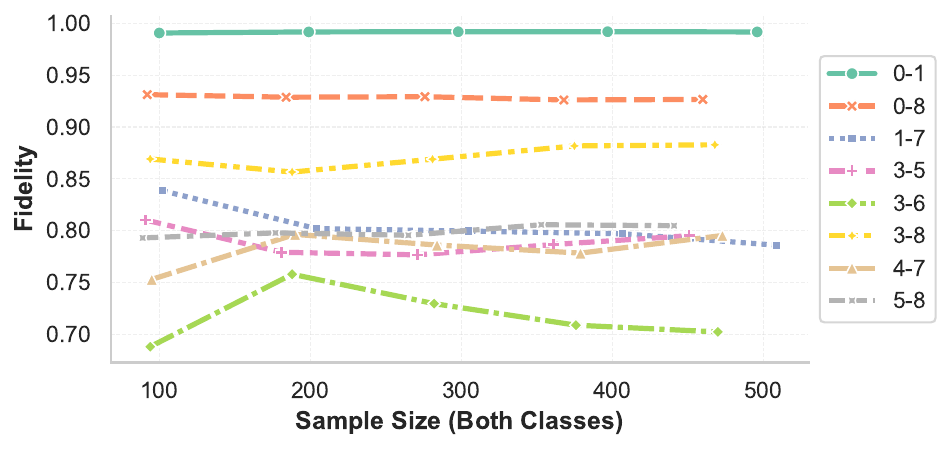}
    \caption{Fidelity results for various sample sizes and binary classification problems within the MNIST dataset.}
    \label{fig:mnist}
\end{figure}

\subsection{Language Case Study: Reconstruction under One-Sided Textual CFs}
\label{sec:llm_case_study}

We present an exploratory case study on reconstruction for text-based classifiers under limited access.
The target model is a binary classifier
$m:\mathbb{R}^d \rightarrow \{0,1\}$ trained on fixed text embeddings.
Texts $x \in \mathcal{X}$ are mapped to embeddings $\psi(x) \in \mathbb{R}^d$ using a frozen LLM-based encoder.
During reconstruction, the target model is accessed only through discrete label queries $m(\psi(x))$.

\paragraph{Representation space.}
Since optimal transport is not defined over raw text, all reconstruction is performed in the embedding space induced by $\psi$.
The embedding model is fixed and shared across all methods.

\paragraph{Task and target model.}
We consider a toxicity classification task \cite{jigsaw2018toxic}.
A binary classifier $m$ is trained on embeddings $\psi(x)$ obtained from a standard toxicity dataset and is treated as a black-box model during reconstruction.

\paragraph{One-sided textual CFs.}
To emulate one-sided recourse, CFs are generated only for toxic instances with $m(\psi(x))=0$.
For each such input $x$, a single textual CF $x^{\mathrm{cf}}$ is produced using an LLM-based rewriting prompt that aims to minimally modify the text toward a non-toxic outcome.
The resulting text is embedded via $\psi$, and the CF is retained only if
$m(\psi(x^{\mathrm{cf}}))=1$.
This yields three offline datasets:
$\mathcal D_0=\{x : m(\psi(x))=0\}$,
$\mathcal D_1=\{x : m(\psi(x))=1\}$, and
$\mathcal D_{\mathrm{cf}}=\{x^{\mathrm{cf}} : x \in \mathcal D_0\}$.

\paragraph{Reconstruction .}
We apply RECAST in the embedding space to compute class prototypes
$\mathbb{Q}_0$ and $\mathbb{Q}_1$ using the barycentric objective.

\paragraph{Evaluation.}
We evaluate query budgets of $100$, $150$, and $200$ samples per class.
Reconstruction fidelity is consistently lower than in tabular settings, typically saturating around $80\%$.
We attribute this to strong representation compression: text inputs are mapped to fixed embeddings and further reduced to a $50$-dimensional space.

\subsection{Runtime}\label{app:runtime}
To investigate runtime, we vary the number of samples used to train RECAST. The runtime experiments are performed on a MacBook Pro with 24 GB RAM and an Apple M4 Pro. We first split the full dataset in 33\% test and 67\% training data. Then we take different fractions, of the training set to train RECAST. We include MNIST ($d=784$) and three versions of adult ($d=14$ before pre-processing, $d=104$ with one-hot-encoding) with varying feature sets ($d=104,d=50,d=10$) to account for different sample dimensionalities. Each RECAST instance is trained for 500 epochs. Note that implementation improvements like early stopping could improve these runtimes, however we want to focus on a fair comparison and trained each instance the same amount of epochs. To mitigate influence of background tasks, we performed 5 runs per configuration. While the complexity of our approach is high in theory, we observe, see \cref{fig:runtime}, that the runtime in practice grows near linear with the number of samples. We additionally see that the number of features, compare adult with 104, 50, and 10 features, has even less impact on the runtime.
\begin{figure}[t]
    \centering
    \includegraphics[width=0.6\linewidth]{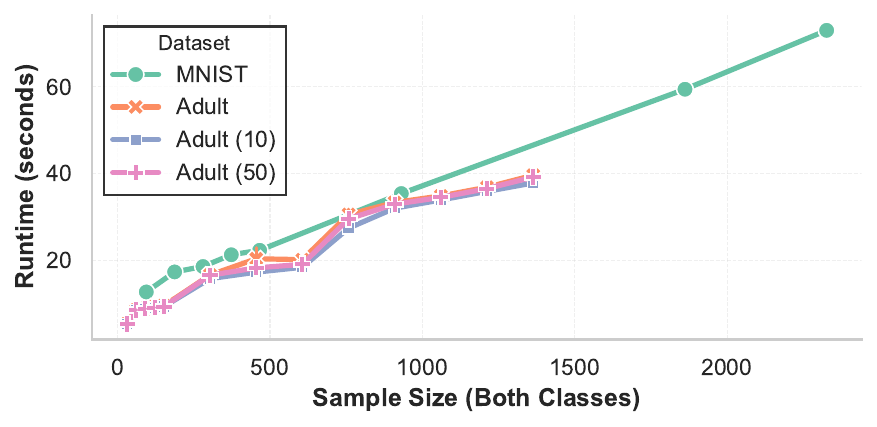}
    \caption{Runtime for RECAST across different datasets and configurations. Each RECAST model is trained for 500 epochs.}
    \label{fig:runtime}
\end{figure}

\newpage
%\section{Multiclass}

%\begin{align*}
%    &\min_{\mathbb{Q}_c} \sum_{\mathclap{c \in \{0,\cdots,J \}}} \left( W_2^2(\mathbb{Q}_c, \mathbb{P}_c) +\sum_{\mathclap{d\in \{0,\cdots,J \}, d\neq c }} \lambda_{cd} W_2^2(\mathbb{Q}_c, \mathbb{P}_{cf(cd)}) \right) \\
%    &+ \beta \sum_{\mathclap{d\in \{0,\cdots,J \}, d\neq c }} \mathcal{R}(\mathbb{Q}_c, \mathbb{Q}_d)
%\end{align*}
%\begin{align*}
%    &\mathcal{R}(\mathbb{Q}_c, \mathbb{Q}_d) = \left( W_2(\mathbb{Q}_c, \mathbb{P}_{cf(cd)}) - W_2(\mathbb{Q}_d, \mathbb{P}_{cf(cd)}) \right)^2, 
%\end{align*}
%\begin{equation*}
%    \lambda_{cd} = \frac{W_2^2(\mathbb{P}_{cf(cd)}, \mathbb{P}_{d})}{W_2^2(\mathbb{P}_{cf(cd)}, \mathbb{P}_c) + W_2^2(\mathbb{P}_{cf(cd)}, \mathbb{P}_d)},
%\end{equation*}
%\begin{equation*}
%    \lambda_{cd} = \lambda_{dc}
%\end{equation*}
%\begin{equation*}
%    \mathbb{P}_{cf(cd)} =  \mathbb{P}\{ cf \mid \{S_{(0)},S_{(1)}\} = \{c,d\} \}
%\end{equation*}
%\begin{align*}
%&S = \{a, b, \ldots, z : \\& W_2(\delta_{cf}, \mathbb{P}_a) \leq  W_2(\delta_{cf}, \mathbb{P}_b) \leq \cdots \leq  W_2(\delta_{cf}, \mathbb{Q}_z)\}
%\end{align*}

\end{document}